\documentclass{article}

\usepackage{PRIMEarxiv}

\usepackage[utf8]{inputenc} 
\usepackage[T1]{fontenc}    
\usepackage{hyperref}       
\usepackage{url}            
\usepackage{booktabs}       
\usepackage{amsfonts}       
\usepackage{nicefrac}       
\usepackage{microtype}      
\usepackage{lipsum}
\usepackage{fancyhdr}       
\usepackage{graphicx}       
\graphicspath{{media/}}     
\usepackage{graphicx}
\usepackage{subcaption}
\usepackage{multirow}
\usepackage{url}
\usepackage{amsmath}
\usepackage{amssymb}
\usepackage{mathtools}
\usepackage{amsthm}
\usepackage{array}
\usepackage{natbib}
\usepackage{wrapfig}

\theoremstyle{plain}
\newtheorem{theorem}{Theorem}[section]

\newtheorem{lemma}[theorem]{Lemma}

\theoremstyle{definition}

\newtheorem{assumption}[theorem]{Assumption}
\theoremstyle{remark}
\newtheorem{remark}[theorem]{Remark}

\usepackage{amsmath,amsfonts,bm}

\def\eqref#1{equation~\ref{#1}}

\def\1{\bm{1}}

\DeclareMathAlphabet{\mathsfit}{\encodingdefault}{\sfdefault}{m}{sl}
\SetMathAlphabet{\mathsfit}{bold}{\encodingdefault}{\sfdefault}{bx}{n}

\newcommand{\bsA}{{\boldsymbol{A}}}
\newcommand{\bsW}{{\boldsymbol{W}}}
\newcommand{\bsa}{{\boldsymbol{a}}}
\newcommand{\bsb}{{\boldsymbol{b}}}
\newcommand{\bsc}{{\boldsymbol{c}}}
\newcommand{\bsh}{{\boldsymbol{h}}}
\newcommand{\bsx}{{\boldsymbol{x}}}
\newcommand{\bsy}{{\boldsymbol{y}}}
\newcommand{\bsz}{{\boldsymbol{z}}}
\newcommand{\bseta}{{\boldsymbol{\eta}}}
\newcommand{\bsxi}{{\boldsymbol{\xi}}}
\newcommand{\bsmu}{{\boldsymbol{\mu}}}
\newcommand{\bsphi}{{\boldsymbol{\phi}}}

\title{From Sparse Sensors to Continuous Fields:\\ STRIDE for Spatiotemporal Reconstruction
}

\author{
  Yanjie Tong, Peng Chen \\
  School of Computational Science and Engineering \\
  Georgia Institute of Technology \\
  Atlanta, GA 30332\\
  \texttt{\{ytong80, pchen402\}@gatech.edu} \\
}

\begin{document}
\maketitle

\begin{abstract}
Reconstructing high-dimensional spatiotemporal fields from sparse point-sensor measurements is a central challenge in learning parametric PDE dynamics. Existing approaches often struggle to generalize across trajectories and parameter settings, or rely on discretization-tied decoders that do not naturally transfer across meshes and resolutions. We propose STRIDE (Spatio-Temporal Recurrent Implicit DEcoder), a two-stage framework that maps a short window of sensor measurements to a latent state with a temporal encoder and reconstructs the field at arbitrary query locations with a modulated implicit neural representation (INR) decoder. Using the Fourier Multi-Component and Multi-Layer Neural Network (FMMNN) as the INR backbone improves representation of complex spatial fields and yields more stable optimization than sine-based INRs. We provide a conditional theoretical justification: under stable delay observability of point measurements on a low-dimensional parametric invariant set, the reconstruction operator factors through a finite-dimensional embedding, making STRIDE-type architectures natural approximators. Experiments on four challenging benchmarks spanning chaotic dynamics and wave propagation show that STRIDE outperforms strong baselines under extremely sparse sensing, supports super-resolution, and remains robust to noise.
\end{abstract}

\section{Introduction}
Many scientific and engineering systems are governed by nonlinear partial differential equations (PDEs) and evolve on high-dimensional spatiotemporal fields that depend on parameters (e.g., forcing, geometry, material properties, or initial/boundary conditions) \cite{brunton_machine_2020, wang_physics-guided_2023}. In practice, however, measurements are often restricted to sparse point sensors, so recovering the full field amounts to an ill-posed state estimation problem \cite{erichson_shallow_2020}. Although high-fidelity solvers can produce accurate trajectories, their computational cost limits repeated simulation across parameter settings and real-time deployment \cite{karniadakis_physics-informed_2021}. These constraints motivate learned surrogates that reconstruct full spatiotemporal states from sparse measurements while generalizing across trajectories and (possibly unobserved) parameters.

Existing data-driven approaches for mapping sparse observations to full states fall broadly into two categories: methods that reconstruct from concurrent measurements only \cite{santos_development_2023, luo_continuous_2024}, and methods that explicitly exploit temporal history (e.g., delay embeddings or recurrent state estimators) \cite{williams_sensing_2024, tomasetto_reduced_2025, lyu_wavecastnet_2024, song_forecasting_2024}. The former can perform well when the mapping is nearly instantaneous, but can struggle in parametric or chaotic regimes where the same sensor snapshot may correspond to multiple underlying states. The latter reduce this ambiguity by using a short observation window, echoing classical embedding and observability ideas \cite{takens1981detecting, sauer1991embedology}, but many existing designs still rely on decoders tied to a fixed discretization or whose parameter count scales with the output dimension, limiting transfer to irregular meshes and varying resolutions.

A complementary approach is to represent fields with \emph{implicit neural representations} (INRs), i.e., continuous coordinate-to-field maps that can be queried at arbitrary locations, naturally supporting irregular meshes and super-resolution. Among INR backbones, SIREN has demonstrated strong expressivity via sine activations \cite{sitzmann_implicit_2020}. In PDE learning, modulated INRs (e.g., FiLM-modulated SIREN) have been used in operator learning and spatiotemporal prediction \cite{perez_film_2017, serrano_operator_2023, du_confild_2024}. While sine-activated networks like SIREN are expressive, they often encounter challenging optimization landscapes when modeling high-frequency content. To address this, the Fourier Multi-Component and Multi-Layer Neural Network (FMMNN) was recently introduced to enhance optimization stability and improve the representation of fine-scale spatial structures \cite{zhang_fourier_2025}.

We propose STRIDE (Spatio-Temporal Recurrent Implicit DEcoder), a two-stage framework that combines windowed sensor history with continuous decoding. STRIDE maps a short observation window to a low-dimensional latent state with a temporal encoder (e.g., LSTM \cite{hochreiter_long_1997}, or alternatives such as Mamba \cite{gu_mamba_2024}) and reconstructs the field at arbitrary query locations with a modulated INR decoder. Our default instantiation uses FMMNN with lightweight shift modulations, enabling resolution-invariant reconstruction while improving training stability relative to sine-based backbones. We further provide a conditional theoretical justification: under stable delay observability of point measurements on an effectively low-dimensional parametric invariant set (Ma\~n\'e-type projection results \cite{mane1981dimension}), the reconstruction operator factors through a finite-dimensional embedding, making STRIDE-type architectures natural approximators.

\textbf{Our contributions:}

\begin{itemize}
    \item \textbf{Two-stage sparse sensors to continuous field reconstruction.} We introduce STRIDE, which combines a window encoder (history/delay information) with a modulated INR decoder to reconstruct continuous spatiotemporal fields from sparse point sensors in multi-trajectory, parametric PDE settings, without assuming the parameters are observed at inference.

    \item \textbf{Discretization- and resolution-invariant decoding.} By decoding through a coordinate-based INR and training with randomized spatial sampling, STRIDE applies to irregular meshes and supports super-resolution by querying the decoder at arbitrary locations.

    \item \textbf{Stable, powerful INR decoding via FMMNN.} We develop an INR decoder design based on FMMNN with shift modulations, yielding improved representation of spatial fields and favorable training dynamics compared to modulated SIREN backbones.

    \item \textbf{Conditional theoretical justification.} Under a stable delay-observability assumption on an effectively low-dimensional parametric invariant set, we formalize the existence and stability of the reconstruction operator and show that it factors through a finite-dimensional embedding, supporting STRIDE-type approximations.

    \item \textbf{Extensive empirical evaluation.} We benchmark STRIDE on four challenging datasets (chaotic dynamics, fluid flow, shallow water, and seismic wave propagation) and provide ablations over sensor budgets, window length, noise, spatial resolutions, and architectural choices (temporal encoders and INR backbones).
\end{itemize}

Paper organization: Section~\ref{sec:method} presents STRIDE, Section~\ref{sec:theory} provides the theoretical justification based on delay observability, Section~\ref{sec:exp} reports the main experiments, and Section~\ref{sec:conc} concludes with limitations and future directions.

\section{Method: STRIDE}
\label{sec:method}
In this section, we present STRIDE, a two-stage framework that (i) encodes a short history of sparse point-sensor measurements into a latent state with a temporal encoder and (ii) decodes this latent state into a continuous spatial field via a modulated implicit neural representation (INR). We describe the observation model, the two-stage mapping, architectural instantiations, and the training procedure.

\subsection{Setting and Observation Model}
Consider a \emph{parametric} spatiotemporal dynamical system (e.g., a PDE) on a spatial domain $\Omega\subset\mathbb{R}^{d_\xi}$ indexed by a parameter $\bsmu\in\mathcal{P}\subset\mathbb{R}^{d_\mu}$. For a fixed $\bsmu$, the state at discrete time index $t$ is a vector-valued field $\bsx_t(\cdot;\bsmu):\Omega\to\mathbb{R}^{d_x}$; we write its pointwise value as $\bsx(\bsxi,t;\bsmu):=\bsx_t(\bsxi;\bsmu)$ for $\bsxi\in\Omega$.

We observe the system through ${N_s}$ point sensors at locations $\{\bsxi^{(1)},\ldots,\bsxi^{(N_s)}\}\subset\Omega$ and a fixed channel projection $\Pi:\mathbb{R}^{d_x}\to\mathbb{R}^{d_o}$ selecting the observed $d_o$ channels. The measurement vector at time $t$ is
\begin{equation}
    \bsy_t(\bsmu):=\bigl(\Pi\,\bsx(\bsxi^{(1)},t;\bsmu),\ldots,\Pi\,\bsx(\bsxi^{(N_s)},t;\bsmu)\bigr)\in\mathbb{R}^{p},\quad  p=N_s d_o.
\end{equation}
Given an observation window $\bsy_{t-k:t}(\bsmu)=\{\bsy_{t-k}(\bsmu),\dots,\bsy_t(\bsmu)\}$ and a query point $\bsxi$, our goal is to reconstruct the full field at time $t$, i.e., to predict $\bsx(\bsxi,t;\bsmu)$ for arbitrary $\bsxi$, uniformly over $\bsmu\in\mathcal{P}$. We often omit the dependence on $\bsmu$ for readability. At inference time, $\bsmu$ is not assumed to be observed, and STRIDE takes only the measurement window $\bsy_{t-k:t}=\{\bsy_{t-k},\dots,\bsy_t\}$ as input.

\subsection{Two-Stage Architecture}
STRIDE consists of a temporal (window) encoder $\mathcal{G}$ that maps the observation window $\bsy_{t-k:t}$ to a latent state $\bsz_t\in\mathbb{R}^{d_z}$ and a modulated INR decoder $\mathcal{F}$ that maps query locations $\bsxi$ and $\bsz_t$ to reconstructed field values.

In the first stage, the encoder $\mathcal{G}$ processes the sparse sensor measurements within a fixed window $\bsy_{t-k:t}$ and outputs $\bsz_t$. When $\mathcal{G}$ is instantiated as a recurrent model (e.g., LSTM/GRU), it evolves hidden states $\bsh_i\in\mathbb{R}^{d_z}$ autoregressively, and we take the final hidden state as the latent state. Concretely,
\begin{equation}
    \bsh_i = g(\bsh_{i-1}, \bsy_i), \quad i=t-k,\dots,t,\quad \bsh_{t-k-1} = \boldsymbol{0},
\end{equation}
where $g$ is the recurrent update. For early times, we pad missing past observations with zeros, i.e., $\bsy_i=\boldsymbol{0}$ for $i\le 0$. In general (including non-recurrent window encoders), we write $\bsz_t=\mathcal{G}(\bsy_{t-k:t})$.

In the second stage, the modulated INR decoder $\mathcal{F}$ takes a query location $\bsxi$ and the latent state $\bsz_t$ and outputs the corresponding field value. Specifically, we reconstruct $\bsx(\bsxi, t) \in \mathbb{R}^{d_x}$ as
\begin{equation}
    \hat{\bsx}(\bsxi, t) = \mathcal{F}(\bsxi, \bsz_t),
\end{equation}
where $\mathcal{F}$ is realized as a coordinate-based INR with shared parameters and lightweight \emph{shift modulations} (FiLM) predicted from $\bsz_t$ by a small hypernetwork. This design supports irregular meshes and arbitrary resolutions at inference by evaluating $\mathcal{F}$ at any set of query points.

\subsection{Architectural Choices}
\textbf{Temporal encoder $\mathcal{G}$.}
Motivated by SHRED \cite{williams_sensing_2024} and SHRED-ROM \cite{tomasetto_reduced_2025}, we use an LSTM as the default temporal model.
STRIDE is agnostic to this choice, and we also evaluate alternative recurrent encoders including GRU \cite{chung_empirical_2014} and Mamba \cite{gu_mamba_2024}.
To isolate the role of recurrence, we additionally consider non-recurrent window encoders such as an MLP applied to concatenated observations and a multi-head self-attention encoder over the observation sequence.

\textbf{INR decoder $\mathcal{F}$.}
We investigate two modulated INR backbones: SIREN \cite{sitzmann_implicit_2020} and FMMNN \cite{zhang_fourier_2025}.
The standard SIREN architecture is defined as
\begin{equation}
    f_\theta(\bsxi)=\bsW_L\left(\sigma_{L-1} \circ \sigma_{L-2} \circ \cdots \circ \sigma_0(\bsxi)\right)+\bsb_L, \text { with } \sigma_i\left(\bseta_i\right)=\sin \left(\omega_0\left(\bsW_i \bseta_i+\bsb_i\right)\right)
\end{equation}
where $\theta=\{\bsW_i,\bsb_i\}_{i=0}^L$ are trainable parameters and $\omega_0$ is a hyperparameter controlling the frequency bandwidth.
To condition the INR on a latent state $\bsz$, we employ FiLM-style shift modulations \cite{perez_film_2017}, which provide a lightweight conditioning mechanism. The resulting modulated SIREN is
\begin{equation}
    f_{\theta,\phi}(\bsxi)=\bsW_L\left(\sigma_{L-1} \circ \sigma_{L-2} \circ \cdots \circ \sigma_0(\bsxi)\right)+\bsb_L, \text { with } \sigma_i\left(\bseta_i\right)=\sin \left(\omega_0\left(\bsW_i \bseta_i+\bsb_i+\bsphi_i\right)\right)
\end{equation}
where $\theta$ are shared parameters and $\phi = \{\bsphi_i\}_{i=0}^{L-1}$ are shift modulations mapped from $\bsz$ via a (typically linear) hypernetwork. We follow standard operator-learning implementations of FiLM-style modulation (e.g., \citealp{serrano_operator_2023}).

The Fourier Multi-Component and Multi-Layer Neural Network (FMMNN) can serve as an INR backbone in the same manner as SIREN, while offering improved optimization behavior for high-frequency structures \cite{zhang_fourier_2025}.
\begin{equation}
    f_\theta(\bsxi)=\sigma_{L} \circ \sigma_{L-1} \circ \cdots \circ \sigma_1(\bsxi), \text { with } \sigma_i\left(\bseta_i\right)=\bsA_i \sin \left(\bsW_i \bseta_i+\bsb_i\right) + \bsc_i
\end{equation}
where $\{\bsW_i,\bsb_i\}$ are randomly initialized and fixed, while $\{\bsA_i,\bsc_i\}$ are trainable. In each layer, the number of components $d_i$ is known as the rank, and the number of hidden neurons $n_i$ is referred to as the layer width; typically $n_i\gg d_i$ to provide a rich set of random basis functions.
To condition the network on some latent state $\bsz$, similar shift modulations are applied to FMMNN.
\begin{equation}
    f_{\theta,\phi}(\bsxi) = \sigma_{L} \circ \sigma_{L-1} \circ \cdots \circ \sigma_1(\bsxi), \text { with } \sigma_i\left(\bseta_i\right)=\bsA_i \sin \left(\bsW_i \bseta_i+\bsb_i\right) + \bsc_i + \bsphi_i
\end{equation}
where $\phi = \{\bsphi_i\}_{i=1}^{L-1}$ are shift modulations applied to each rank, mapped from $\bsz$ via a trainable linear hypernetwork.

To enhance the reconstruction, Fourier encoding \cite{qiu_derivative-enhanced_2024} is incorporated to better capture high-frequency spatial components. Specifically, a trainable parameter matrix $P$ is included to map the spatial coordinates to a higher-dimensional embedding $\bsxi \mapsto [\bsxi, \cos(P\bsxi), \sin(P\bsxi)]$, which serves as the input to the decoder.

\begin{figure*}[htb]
  \begin{center}
    \centerline{\includegraphics[width=\textwidth]{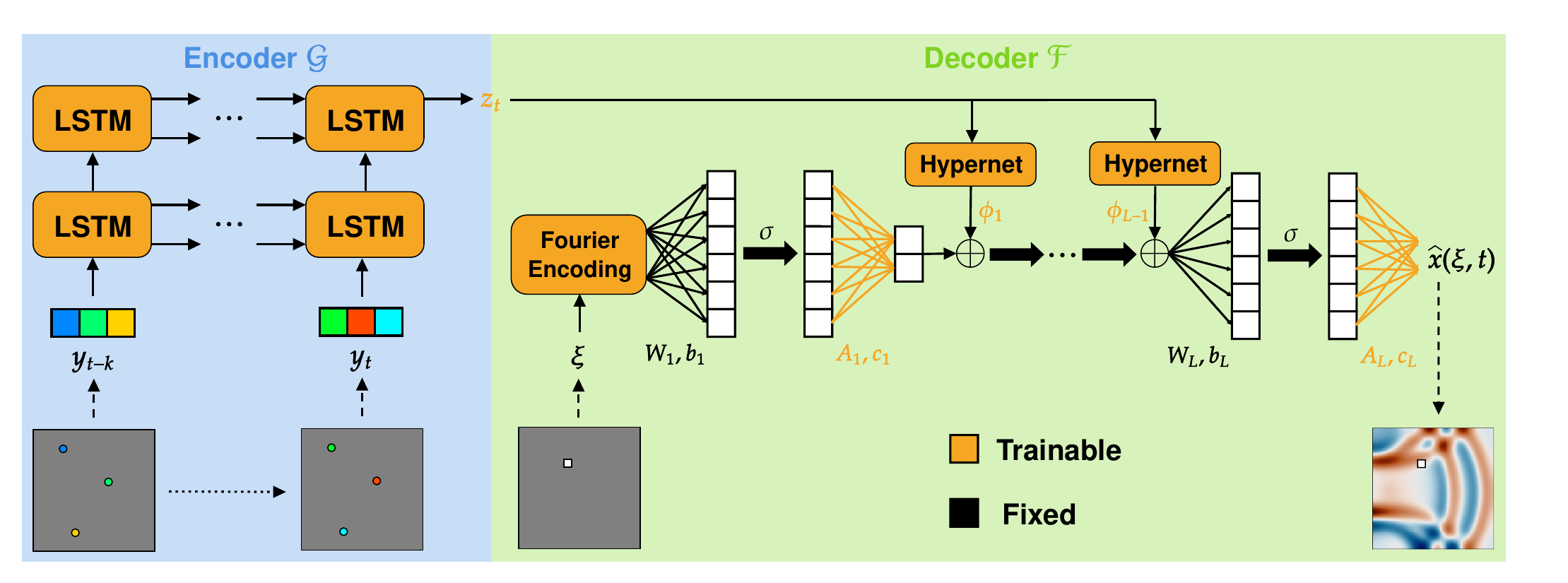}}
    \caption{Overview of STRIDE. A temporal encoder maps a window of point-sensor observations $\bsy_{t-k:t}$ to a latent state $\bsz_t$. A conditional spatial decoder, e.g., modulated INR decoder (FMMNN with shift modulation), then evaluates $\hat{\bsx}(\bsxi,t)$ at arbitrary query locations $\bsxi$ (optionally Fourier-encoded), enabling discretization- and resolution-invariant reconstruction across parameterized trajectories.}
    \label{fig:diagram}
  \end{center}
  \vskip -0.2in
\end{figure*}

\subsection{Training and Normalization}
We adopt an end-to-end training strategy to jointly optimize the parameters of the temporal encoder $\mathcal{G}$ and the INR decoder $\mathcal{F}$. The training data consist of input-output pairs of the form $\{\bsy_{t-k:t}, \bsxi, \bsx(\bsxi, t)\}$, where $\bsxi$ denotes query locations and $\bsx(\bsxi,t)$ the corresponding ground-truth field values. We minimize the following Mean Squared Error (MSE):
\begin{equation}
    \mathcal{L} = \frac{1}{N_{tr} N_t N_\xi}\sum_{j=1}^{N_{tr}}\sum_{t=1}^{N_t}\sum_{q=1}^{N_\xi}     \left\|\bsx\!\left(\bsxi_{j,t}^{(q)}, t;\bsmu_j\right) -     \hat{\bsx}\!\left(\bsxi_{j,t}^{(q)}, t, \bsmu_j\right)\right\|^2,
\end{equation}
where $j$ indexes training trajectories generated at parameters $\{\bsmu_j\}_{j=1}^{N_{tr}}$ and $N_t$ is the number of time steps per trajectory. For each snapshot $(j,t)$, we sample $N_\xi$ query points $\{\bsxi_{j,t}^{(q)}\}_{q=1}^{N_\xi}$ from the available spatial discretization (or directly from $\Omega$) and compute predictions via $\hat{\bsx}(\bsxi,t,\bsmu_j)=\mathcal{F}\!\left(\bsxi,\mathcal{G}(\bsy_{t-k:t}(\bsmu_j))\right)$. Unlike SHRED, which matches full spatial grids at each step, randomized spatial sampling reduces training cost and directly enables discretization- and resolution-invariant learning. To facilitate fast and stable training, we use the SOAP optimizer \cite{vyas_soap_2025} with weight decay, along with the ReduceLROnPlateau scheduler.

Prior to training, we perform channel-wise normalization similar to \cite{regazzoni_learning_2024}. Specifically, we scale the state variables $\bsx$ (and thus the sensor measurements $\bsy$) and the spatial coordinates $\bsxi$ such that each component resides within the range $[-1,1]$. This normalization helps improve numerical stability, especially for early-time snapshots and boundary regions that may contain many near-zero values.

\section{Theoretical Justification}
\label{sec:theory}
We provide a conditional justification for why reconstructing the full field from a \emph{finite} window of point sensors is plausible, and why a two-stage architecture (temporal encoder + spatial decoder) is expressive enough to approximate the associated reconstruction operator.

\subsection{Setting}
Let $\Omega\subset\mathbb{R}^{d_\xi}$ be a compact spatial domain and let ${d_x}$ be the number of field channels. Consistent with Section~\ref{sec:method}, we denote the full state at discrete time index $t$ by the vector-valued field $\bsx_t(\cdot)\in X$, and write its pointwise value as $\bsx(\bsxi,t):=\bsx_t(\bsxi)\in\mathbb{R}^{d_x}$. We make the dependence on the parameter explicit by $\bsx_t(\cdot;\bsmu)$ and $\bsx(\bsxi,t;\bsmu)$.
Here $X$ is a Banach space continuously embedded in $C^0(\Omega,\mathbb{R}^{d_x})$ so that point evaluation is continuous, e.g., $X=H^{s}(\Omega,\mathbb{R}^{d_x})$ with $s>d_\xi/2$ by Sobolev embedding.

In our numerical benchmarks, trajectories arise from a \emph{parametric} PDE: there is a compact parameter set $\mathcal{P}\subset\mathbb{R}^{d_\mu}$ and each trajectory is generated under some fixed $\bsmu\in\mathcal{P}$ (e.g., geometry/forcing/initial-condition parameters). Let $\{S_\bsmu^t\}_{t\ge 0}$ be the (semi)flow on $X$ induced by the PDE at parameter $\bsmu$, and fix a sampling interval $\Delta t>0$ with discrete-time map
$
    F_\bsmu := S_\bsmu^{\Delta t}.
$
Thus, for a fixed $\bsmu$, $\bsx_{t+1}=F_\bsmu\bsx_t$ for integer $t\ge 0$.
We assume that for each $\bsmu\in\mathcal{P}$ the dynamics admit a compact forward-invariant set $\mathcal{A}_\bsmu\subset X$ (e.g., a parameter-dependent global attractor in dissipative settings) on which the solution map is regular enough for delay embedding arguments.
To unify multiple-trajectory learning across parameters, we consider the augmented invariant set in $\mathcal{P}\times X$,
\begin{equation}
    \mathcal{A}:=\bigl\{(\bsmu,\bsx):\ \bsmu\in\mathcal{P},\ \bsx\in\mathcal{A}_\bsmu\bigr\}\subset \mathcal{P}\times X,
\end{equation}
equipped with the product norm $\|(\bsmu,\bsx)-(\bsmu',\bsx')\|_{\mathcal{P}\times X}:=\|\bsmu-\bsmu'\|+\|\bsx-\bsx'\|_X$.

\subsection{Point-sensor observation and delay observability}
Fix ${N_s}$ sensor locations $\{\bsxi^{(1)}, \dots, \bsxi^{(N_s)}\}\subset\Omega$ and let $d_o$ denote the number of observed channels at each sensor, so the observation dimension is $p=N_s\times d_o$. Let $\Pi:\mathbb{R}^{d_x}\to\mathbb{R}^{d_o}$ be the fixed linear projection selecting the observed channels. Define the point-sensor map $h:X\to\mathbb{R}^{p}$ by stacking projected sensor values as
\begin{equation}
    h(\bsx)=\bigl(\Pi\bsx(\bsxi^{(1)}),\ldots,\Pi\bsx(\bsxi^{(N_s)})\bigr)\in\mathbb{R}^{p}.
\end{equation}
For an integer lag $k\ge 0$, define the \emph{forward} delay map $\Phi_k:\mathcal{A}\to\mathbb{R}^{(k+1)p}$ on the augmented state by
\begin{equation}
    \Phi_k(\bsmu,\bsx)=\bigl(h(\bsx),\,h(F_\bsmu\bsx),\,\ldots,\,h(F_\bsmu^{k}\bsx)\bigr).
\end{equation}
In data terms, for a trajectory generated under a fixed $\bsmu$, $\bsx_t=F_\bsmu^t \bsx_0$ (with integer time index), define the per-step sensor vector $\bsy_t:=h(\bsx_t)\in\mathbb{R}^{p}$. Then the observation window $\bsy_{t-k:t}$ equals $\Phi_k(\bsmu,\bsx_{t-k})$.

\begin{assumption}[Finite-dimensional long-term complexity]\label{assump:finite-dim}
The augmented invariant set $\mathcal{A}\subset\mathcal{P}\times X$ is compact and has finite box-counting dimension $d_A:=\dim_B(\mathcal{A})<\infty$.
\end{assumption}

Assumption~\ref{assump:finite-dim} is satisfied by broad classes of \emph{dissipative} PDEs for which the solution (semi)flow possesses a compact global attractor with finite fractal/box-counting dimension; classical examples include the 1D Kuramoto--Sivashinsky equation, the 2D Navier--Stokes equations, and geophysical flows (see, e.g., \citealp{temam1997infinite_dimensional}).
In our benchmarks, the KS, FlowAO, and SWE settings are dissipative, and the seismic-wave setup includes an absorbing damping mask that introduces dissipation in the simulated dynamics.
Moreover, our datasets are parametric: varying $\bsmu$ (e.g., forcing/geometry profiles, initial conditions, source location, velocity map) traces out a \emph{family} of invariant sets $\{\mathcal{A}_\bsmu\}_{\bsmu\in\mathcal{P}}$; Assumption~\ref{assump:finite-dim} asserts that the resulting augmented set $\mathcal{A}\subset\mathcal{P}\times X$ remains effectively finite-dimensional.

\begin{assumption}[Stable delay observability]\label{assump:delay-obs}
There exist a lag $k$ and constants $L_\Phi,L_\Psi>0$ such that $\Phi_k$ is injective on $\mathcal{A}$ and has an inverse $\Psi:Y\to\mathcal{A}$ on $Y:=\Phi_k(\mathcal{A})$ satisfying the bi-Lipschitz-type stability bounds
\begin{equation}
    \|\Phi_k(\bsa)-\Phi_k(\bsa')\| \le L_\Phi \|\bsa-\bsa'\|_{\mathcal{P}\times X},
\end{equation}
with $\bsa = (\bsmu,\bsx)$ and $\bsa' = (\bsmu',\bsx')$ and 
\begin{equation}
    \|\Psi(\bsy)-\Psi(\bsy')\|_{\mathcal{P}\times X} \le L_\Psi \|\bsy-\bsy'\|.
\end{equation}
Equivalently, the inverse is continuous and stable to perturbations of the observation window.
\end{assumption}

Assumption~\ref{assump:delay-obs} is a \emph{stable} observability/embedding condition for the delay-coordinate map on the augmented set $\mathcal{A}$. 
For infinite-dimensional systems such as dissipative PDEs,  
the Ma\~n\'e projection theorem shows that compact sets of finite box-counting dimension in Banach spaces admit injective linear projections into $\mathbb{R}^{d_z}$ (with Hölder continuous inverse) when $d_z$ is sufficiently large \cite{mane1981dimension}.
In our setting, $h$ is a fixed \emph{point-sensor} map and does not directly observe $\bsmu$, so genericity is not automatic; we therefore treat Assumption~\ref{assump:delay-obs} as an identifiability assumption on $(\bsmu,\bsx)$ and empirically examine parameter identification.

\subsection{A continuous reconstruction operator}
Define the (ground-truth) field at time index $t$ as $\bsx(\cdot,t):=\bsx_t(\cdot)\in C^0(\Omega,\mathbb{R}^{d_x})$.
Under Assumption~\ref{assump:delay-obs}, an observation window $\bsy\in Y$ identifies the corresponding augmented state $(\bsmu,\bsx)=\Psi(\bsy)\in\mathcal{A}$ at the \emph{start} of the window. The field at the end of the window is then $F_\bsmu^k \bsx$. This defines the reconstruction operator $T:Y\to C^0(\Omega,\mathbb{R}^{d_x})$:
\begin{equation}
    T(\bsy)(\bsxi) := \bigl(F_\bsmu^{k}\bsx\bigr)(\bsxi),\quad \bsy\in Y,\, \bsxi\in\Omega.
\end{equation}

\begin{lemma}[Continuity and stability of $T$]\label{lem:stability-T}
If the map $(\bsmu,\bsx)\mapsto F_\bsmu^k\bsx$ is continuous on $\mathcal{A}$ and point evaluation $\bsx\mapsto \bsx(\bsxi)$ is continuous on $X$, then $T$ is continuous on the compact set $Y$. Moreover, if $(\bsmu,\bsx)\mapsto F_\bsmu^k\bsx$ is $L_{F^k}$-Lipschitz on $\mathcal{A}$ (with respect to $\|\cdot\|_{\mathcal{P}\times X}$) and the embedding $X\hookrightarrow C^0(\Omega,\mathbb{R}^{d_x})$ satisfies
$\|\bsx\|_{C^0}:=\sup_{\bsxi\in\Omega}\|\bsx(\bsxi)\|\le C_{\mathrm{ev}}\|\bsx\|_X,\; \forall \bsx\in X,$
then
\begin{equation}
    \sup_{\bsxi\in\Omega}\|T(\bsy)(\bsxi)-T(\bsy')(\bsxi)\|
    \le C_{\mathrm{ev}}\,L_{F^k}\,L_\Psi\,\|\bsy-\bsy'\|.
\end{equation}
\end{lemma}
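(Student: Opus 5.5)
The plan is to write $T$ as a composition of three maps and read off continuity and the Lipschitz bound from the factors. Set $G:\mathcal{A}\to X$, $G(\bsmu,\bsx):=F_\bsmu^k\bsx$, and let $\iota:X\hookrightarrow C^0(\Omega,\mathbb{R}^{d_x})$ be the embedding. By definition of $T$,
\begin{equation*}
T=\iota\circ G\circ\Psi \quad\text{on } Y .
\end{equation*}

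\textbf{Compactness of $Y$.} $\Phi_k$ is continuous on $\mathcal{A}$ because it is Lipschitz by Assumption~\ref{assump:delay-obs}. Alternatively, each component $h\circ F_\bsmu^j$ is continuous: $h$ is a finite stack of point evaluations composed with the linear map $\Pi$, and $(\bsmu,\bsx)\mapsto F^j_\bsmu\bsx$ is continuous. Since $\mathcal{A}$ is compact by Assumption~\ref{assump:finite-dim}, its image $Y=\Phi_k(\mathcal{A})$ is compact.

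\textbf{Continuity of $T$.} $\Psi$ is continuous on $Y$; it is in fact $L_\Psi$-Lipschitz by Assumption~\ref{assump:delay-obs}. Even without that, a continuous bijection from the compact set $\mathcal{A}$ onto the Hausdorff space $Y$ has a continuous inverse. $G$ is continuous by hypothesis. For the last factor, I will read the hypothesis on point evaluation together with the standing setting, in which $X$ is continuously embedded in $C^0$. This makes $\iota$ a bounded linear map, hence continuous.

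The main subtlety is what ``continuous'' means for $T$. If $C^0$ carries the sup-norm, continuity of $\iota$ is needed, and pointwise continuity of evaluation alone would only give continuity of $\bsy\mapsto T(\bsy)(\bsxi)$ for each fixed $\bsxi$. I will state both readings. Pointwise continuity follows from the point-evaluation hypothesis alone, and uniform continuity in $\bsy$ uses the embedding. Compactness of $Y$ then upgrades continuity of $T$ to uniform continuity.

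\textbf{Stability estimate.} For $\bsy,\bsy'\in Y$, write $\Psi(\bsy)=(\bsmu,\bsx)$ and $\Psi(\bsy')=(\bsmu',\bsx')$. The chain is
\begin{equation*}
\sup_{\bsxi\in\Omega}\|T(\bsy)(\bsxi)-T(\bsy')(\bsxi)\| = \|F_\bsmu^k\bsx-F_{\bsmu'}^k\bsx'\|_{C^0}\le C_{\mathrm{ev}}\|F_\bsmu^k\bsx-F_{\bsmu'}^k\bsx'\|_X \le C_{\mathrm{ev}}L_{F^k}\|\Psi(\bsy)-\Psi(\bsy')\|_{\mathcal{P}\times X}\le C_{\mathrm{ev}}L_{F^k}L_\Psi\|\bsy-\bsy'\| .
\end{equation*}
The first inequality uses linearity of $\iota$ applied to the difference $F_\bsmu^k\bsx-F_{\bsmu'}^k\bsx'\in X$. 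The second uses the Lipschitz hypothesis on $G$, and the third uses Assumption~\ref{assump:delay-obs}.
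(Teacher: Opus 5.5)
Your proposal is correct and follows the paper's proof essentially verbatim. The paper also writes $T=\iota\circ G_k\circ\Psi$, obtains continuity from this composition (using that $\Psi$ is Lipschitz, hence continuous, and that $\iota$ is the continuous embedding), and derives the bound by the same three-step chain through $C_{\mathrm{ev}}$, $L_{F^k}$ and $L_\Psi$. Your added justification that $Y=\Phi_k(\mathcal{A})$ is compact, and your remark that sup-norm continuity really uses the embedding rather than pointwise evaluation alone, are sound points that the paper leaves implicit.
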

We provide the proof in Appendix~\ref{sec:proof-lemma}.  Stable delay observability implies that small perturbations of the sensor window (noise, discretization) induce controlled reconstruction errors.

\subsection{Approximation by a STRIDE-type architecture}
We now connect the operator $T$ to STRIDE's structure. Let $\mathcal{G}:Y\to Z$ be a temporal encoder that maps observation windows to a latent state $z \in Z$, and let $\mathcal{F}:\Omega\times Z\to\mathbb{R}^{d_x}$ be a conditional spatial decoder. STRIDE represents reconstructions as $\hat{\bsx}(\bsxi)=\mathcal{F}(\bsxi,\mathcal{G}(\bsy))$.

\begin{theorem}[Operator factorization and approximation]\label{thm:operator-approx}
Assume Assumptions~\ref{assump:finite-dim}--\ref{assump:delay-obs} and that $G_k(\bsmu,\bsx):=F_\bsmu^k\bsx$ is continuous on $\mathcal{A}$. Let $\mathcal{A}_X$ be the projection of $\mathcal{A}$ onto $X$ and set $d_X:=\dim_B(\mathcal{A}_X)$.
Then there exists a compact set $Z\subset\mathbb{R}^{d_z}$ with ${d_z}>2d_X$ and continuous maps 
$\mathcal{G}:Y\to Z$ and $\mathcal{F}:\Omega\times Z \to\mathbb{R}^{d_x}$ of STRIDE form such that
\begin{equation}
    \sup_{\bsy\in Y}\sup_{\bsxi\in\Omega}\bigl\|T(\bsy)(\bsxi)-\mathcal{F}(\bsxi,\mathcal{G}(\bsy))\bigr\| < \varepsilon,
\end{equation}
where $\mathcal{G}$ and $\mathcal{F}$ can be uniformly approximated on compact sets by standard feedforward neural networks. 
\end{theorem}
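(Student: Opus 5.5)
The strategy is to factor $T$ through a finite-dimensional embedding of the set of reconstructed fields, and then approximate the two resulting factors by neural networks.

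\textbf{Step 1: the set of output fields.} Let $\mathcal{B}:=G_k(\mathcal{A})\subset X$ be the set of fields at the end of the window. For $(\bsmu,\bsx)\in\mathcal{A}$ we have $\bsx\in\mathcal{A}_\bsmu$, and forward invariance gives $F_\bsmu^k\bsx\in\mathcal{A}_\bsmu\subset\mathcal{A}_X$. Hence $\mathcal{B}\subset\mathcal{A}_X$, and $\mathcal{A}_X$ is compact because it is the continuous projection of the compact set $\mathcal{A}$.

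To bound the dimension of $\mathcal{A}_X$, I use that the projection $(\bsmu,\bsx)\mapsto\bsx$ is $1$-Lipschitz and that box-counting dimension does not increase under Lipschitz maps. This gives $d_X\le d_A<\infty$.

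\textbf{Step 2: Ma\~n\'e embedding.} I invoke the Ma\~n\'e projection theorem on $\mathcal{A}_X$. For any $d_z>2d_X$, a prevalent (in particular, some) bounded linear map $L:X\to\mathbb{R}^{d_z}$ is injective on $\mathcal{A}_X$. Since $\mathcal{A}_X$ is compact, $L$ is a homeomorphism onto its image, and its inverse $L^{-1}:L(\mathcal{A}_X)\to\mathcal{A}_X$ is continuous (Hölder, in fact).

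\textbf{Step 3: the two factors.} Define
\[
Z:=L(\mathcal{A}_X),\qquad \mathcal{G}:=L\circ G_k\circ\Psi:Y\to Z .
\]
Here $\mathcal{G}$ is continuous because $\Psi$ is Lipschitz by Assumption~\ref{assump:delay-obs}, $G_k$ is continuous by hypothesis, and $L$ is bounded.

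For the decoder, set $\mathcal{F}(\bsxi,\bsz):=\bigl(L^{-1}\bsz\bigr)(\bsxi)$. This $\mathcal{F}$ is jointly continuous on $\Omega\times Z$. Indeed,
\[
|\mathcal{F}(\bsxi,\bsz)-\mathcal{F}(\bsxi',\bsz')|\le C_{\mathrm{ev}}\|L^{-1}\bsz-L^{-1}\bsz'\|_X+|(L^{-1}\bsz')(\bsxi)-(L^{-1}\bsz')(\bsxi')|.
\]
The first term is controlled by continuity of $L^{-1}$. The second is controlled uniformly in $\bsz'$ because $L^{-1}(Z)=\mathcal{A}_X$ is compact in $C^0$, hence equicontinuous by Arzelà–Ascoli. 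With these choices $\mathcal{F}(\bsxi,\mathcal{G}(\bsy))=T(\bsy)(\bsxi)$ exactly, so the error bound holds for every $\varepsilon>0$.

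\textbf{Step 4: neural network approximation.} $Y$ is compact as the continuous image of $\mathcal{A}$, and $\Omega\times Z$ is compact. I extend $\mathcal{G}$ and $\mathcal{F}$ to continuous maps on Euclidean neighborhoods by Tietze's theorem and apply the universal approximation theorem to each.

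To handle composition, I take $\hat{\mathcal F}$ uniformly $\varepsilon/2$-close to $\mathcal{F}$ on a compact $\delta$-neighborhood $\Omega\times Z_\delta$ of $\Omega\times Z$. By uniform continuity of $\hat{\mathcal F}$ on that set, I then choose $\hat{\mathcal G}$ within $\delta'\le\delta$ of $\mathcal{G}$, with $\delta'$ small enough that this perturbation costs at most $\varepsilon/2$. The two errors combine to give total error below $\varepsilon$.

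\textbf{Main obstacle.} The delicate point is Step 2 together with the joint continuity in Step 3. Ma\~n\'e's theorem in Banach spaces only gives a Hölder inverse, and it must be applied to $\mathcal{A}_X$ rather than to $\mathcal{A}$. This is why $d_z>2d_X$ appears in the statement, and why I need $\mathcal{B}\subset\mathcal{A}_X$. If forward invariance were unavailable, I would instead apply Ma\~n\'e's theorem directly to the compact set $G_k(\mathcal{A})$, whose dimension is bounded only if $G_k$ is Lipschitz.
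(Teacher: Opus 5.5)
Your proposal is correct and follows the paper's proof essentially step for step: a Ma\~n\'e embedding $E$ of $\mathcal{A}_X$, the latent map $E\circ G_k\circ\Psi$, and the decoder $(\bsxi,\bsz)\mapsto (E^{-1}\bsz)(\bsxi)$, which together give an exact factorization of $T$, followed by universal approximation of both factors. Where you differ, you are more careful than the paper: you justify $G_k(\mathcal{A})\subset\mathcal{A}_X$ via forward invariance and $d_X\le d_A$ via the $1$-Lipschitz projection, and your composition step (approximating on $\Omega\times Z_\delta$ and then using uniform continuity of the network $\hat{\mathcal{F}}$) avoids the paper's implicit assumption that the network output $\mathcal{G}(\bsy)$ lies in $Z$, which is the only place where the paper's decoder target $D$ and its modulus $\omega_D$ are defined.
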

The proof is provided in Appendix~\ref{sec:proof-theorem}.
Though the formal approximation guaranty is for standard feedforward networks, in practice, we instantiate $\mathcal{G}$ with a windowed recurrent encoder and $\mathcal{F}$ with a modulated INR; these architectures are more expressive in our experiments.

\section{Experiments}
\label{sec:exp}
We evaluate STRIDE on four benchmarks that reconstruct spatiotemporal fields from sparse point-sensor observations. The benchmarks range from chaotic dynamics to wave propagation and all involve multiple trajectories driven by varying initial conditions and/or physical parameters. In this section, we denote STRIDE-FMMNN as our default instantiation with an FMMNN INR backbone and STRIDE-SIREN as the SIREN-backbone variant.

\subsection{Experimental Setup}

\paragraph{Kuramoto--Sivashinsky Equation.} 
The Kuramoto--Sivashinsky (KS) equation is a nonlinear PDE that exhibits spatiotemporal chaos and strong sensitivity to initial conditions, as illustrated in Fig.~\ref{fig:kstraj}, where the system exhibits divergent dynamics despite starting from nearly identical initial conditions. Following \cite{tomasetto_reduced_2025}, we generate data using the Exponential Time Differencing fourth-order Runge--Kutta (ETDRK4) method with periodic boundary conditions on $N_\xi=100$ spatial points for $N_t=201$ snapshots. We fix the viscosity to 1.0 and vary only the initial condition, generating 1,000 trajectories.
\begin{figure*}[htb]
  \begin{center}
    \centerline{\includegraphics[width=0.33\textwidth]{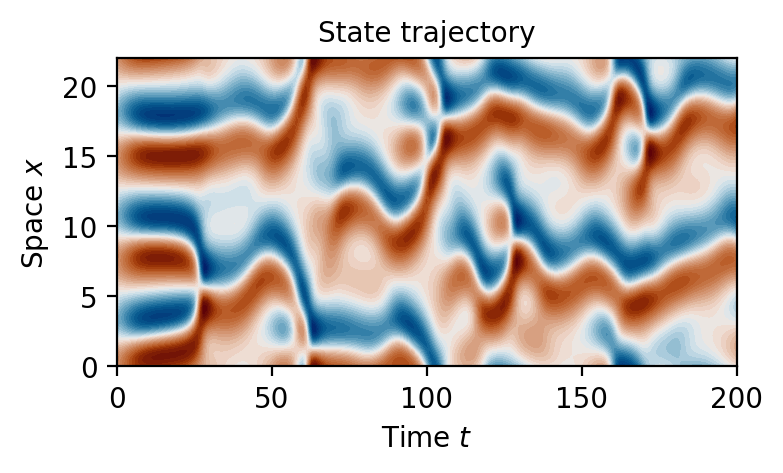}
    \includegraphics[width=0.33\textwidth]{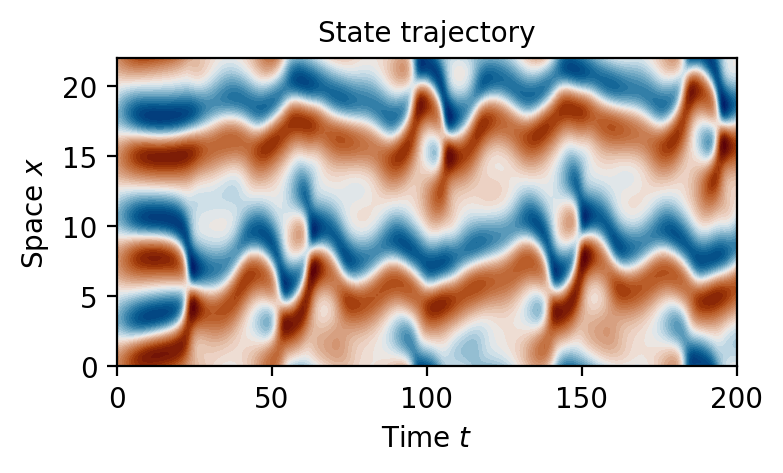}
    \includegraphics[width=0.33\textwidth]{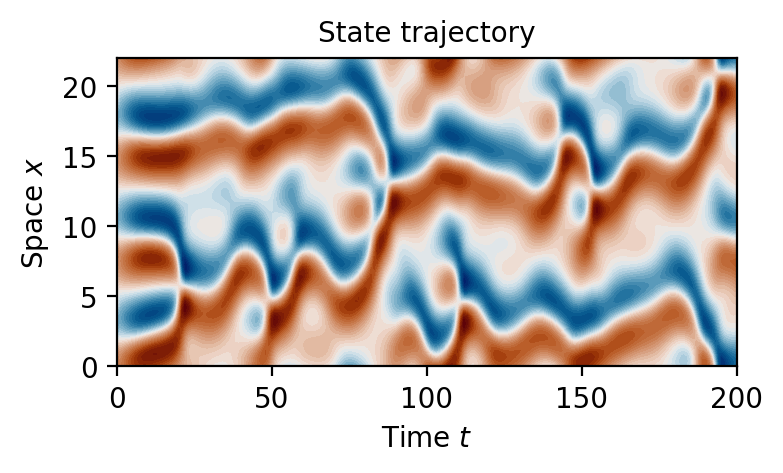}}
    \caption{Divergent trajectories in the chaotic KS system. The spatiotemporal fields are initialized with nearly identical frequency parameters ($2.99, 3.00, \text{ and } 3.01$), illustrating the system's sensitive dependence on initial conditions.}
    \label{fig:kstraj}
  \end{center}
    \vskip -0.2in
\end{figure*}

\paragraph{Flow Around an Obstacle.}
Flow Around an Obstacle (FlowAO) is an incompressible flow benchmark with time-dependent physical and geometric parameters. Following \cite{tomasetto_reduced_2025}, we generate 200 trajectories with $N_t=200$ time steps each by solving the unsteady Navier--Stokes equations. Each trajectory corresponds to a different time profile of the angle of attack and inflow intensity, as well as a different obstacle shape; see one example trajectory in Fig.~\ref{fig:flowaotraj}. Each snapshot contains $N_\xi=40,296$ spatial points per velocity component.
\begin{figure*}[htb]
  \begin{center}
    \includegraphics[width=0.49\textwidth]{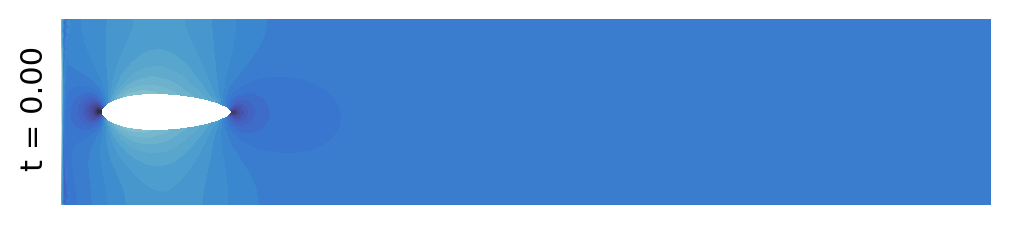}
    \includegraphics[width=0.49\textwidth]{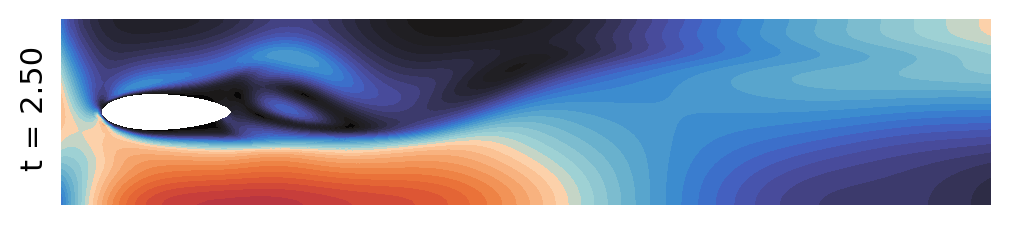}
    \includegraphics[width=0.49\textwidth]{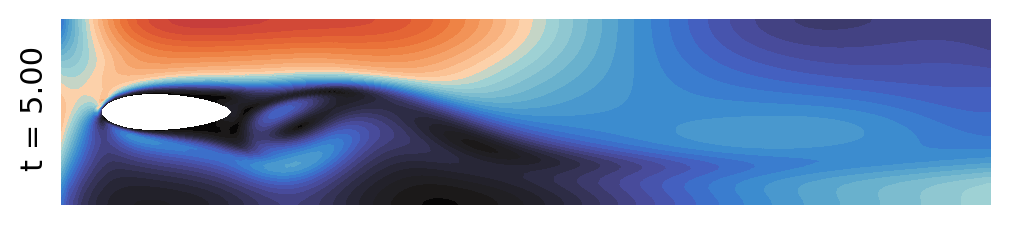}
    \includegraphics[width=0.49\textwidth]{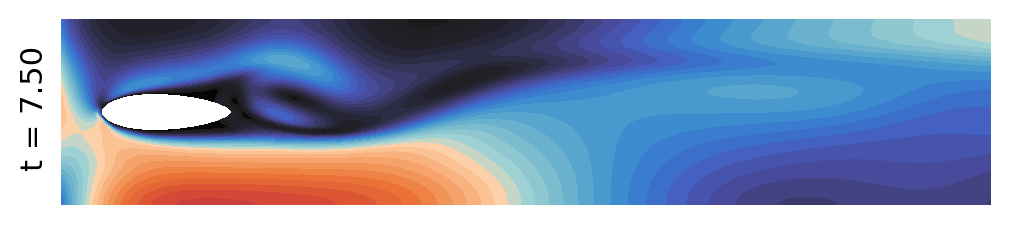}
    \includegraphics[width=0.49\textwidth]{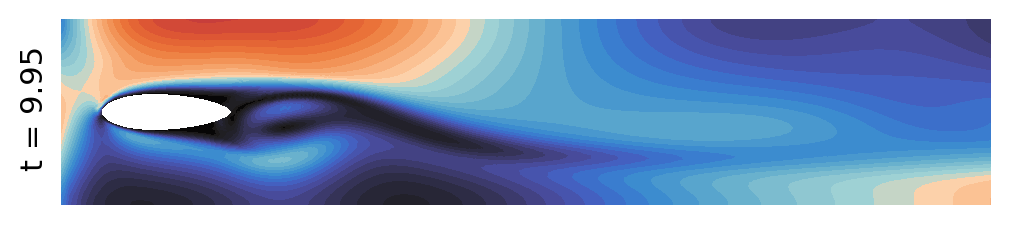}
    \caption{Example trajectory of the velocity field in the FlowAO case, showing unsteady flow patterns influenced by time-varying angles of attack and inflow intensities.}
    \label{fig:flowaotraj}
  \end{center}
    \vskip -0.1in
\end{figure*}

\paragraph{Shallow Water Equations.}
The shallow water equations (SWE) model free-surface flow, e.g., as used in tsunami modeling shown in Fig.~\ref{fig:swetraj}. We generate 200 trajectories with $N_t=201$ time steps, initialized with Gaussian bumps of surface elevation at random locations, following \cite{si_latent-ensf_2024}. The spatial domain is discretized uniformly at two resolutions: $64 \times 64$ and $128 \times 128$.

\begin{figure*}[htb]
  \begin{center}
    \centerline{\includegraphics[width=\textwidth]{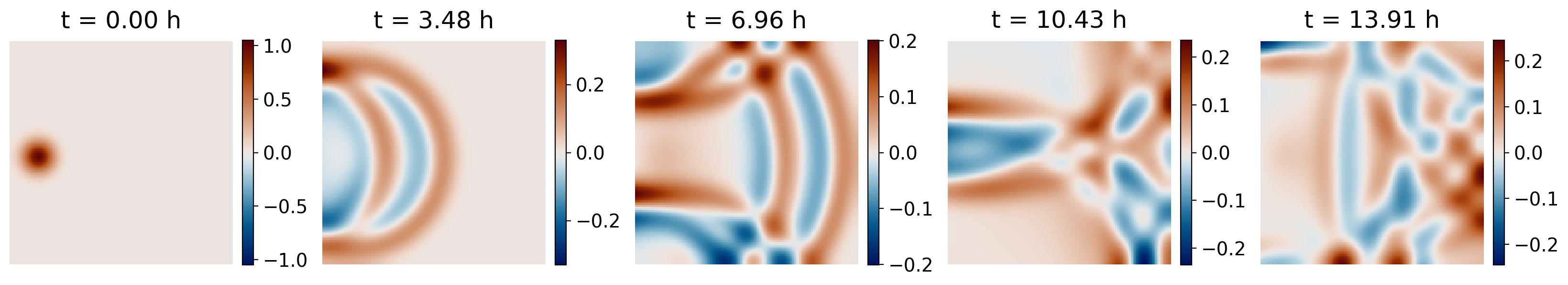}}
    \caption{Example trajectory of the surface elevation in the SWE case, demonstrating the propagation of a tsunami-like wave generated from a Gaussian source.}
    \label{fig:swetraj}
  \end{center}
    \vskip -0.2in
\end{figure*}

\paragraph{Seismic Wave Propagation.}
\begin{wrapfigure}{r}{0.35\textwidth}
  \begin{center}
    \vspace{-5pt} 
    \includegraphics[width=0.27\textwidth]{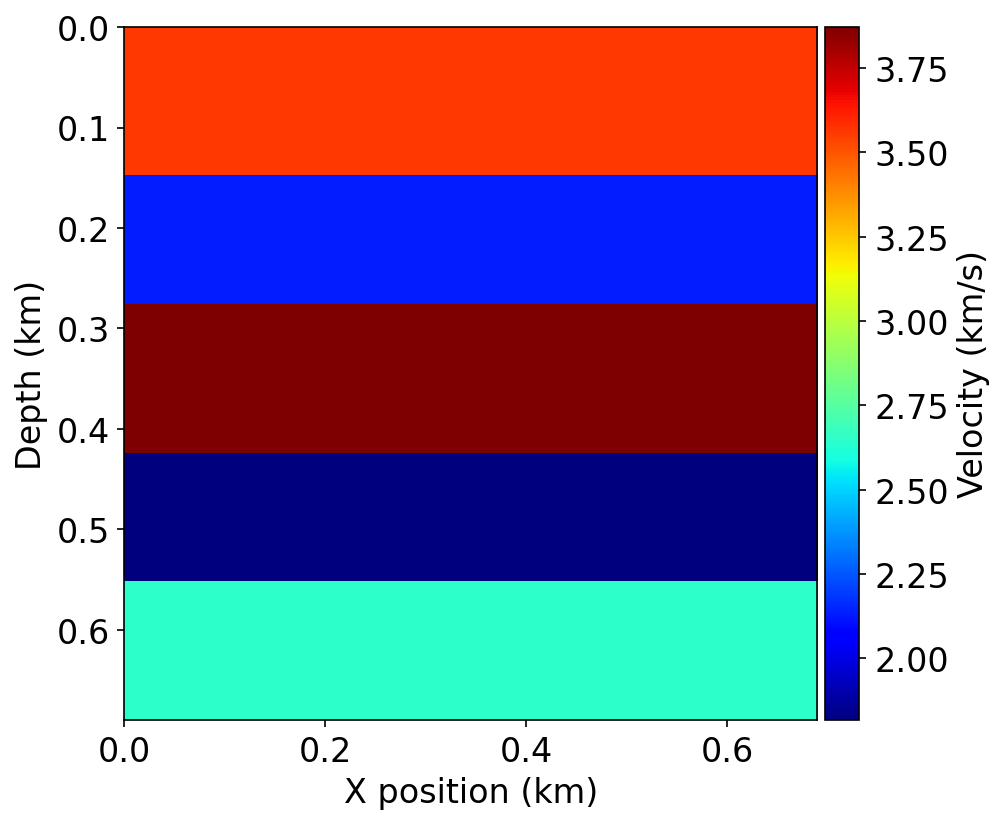}
    \caption{Sample of velocity map.}
    \label{fig:seismic}
  \end{center}
\end{wrapfigure}
We consider wavefields governed by the acoustic wave equation. Using Devito \cite{devito-compiler}, we generate 1,000 trajectories with $N_t=201$ time steps, where a 10\,Hz Ricker source is placed at random locations in a multi-layer velocity map sourced from the “FlatVel-B” (FVB) dataset from OpenFWI \cite{deng_openfwi_2023}, see one sample in Fig.~\ref{fig:seismic}. The spatial resolution is $70 \times 70$, with an absorbing damping mask applied at the domain boundaries to mimic an infinite domain. Layer boundaries induce complex wave phenomena, including refraction and reflection, thereby significantly complicating the reconstruction task. One example of the wave propagation is shown in Fig.~\ref{fig:seismictraj}.

\begin{figure*}[htb]
  \begin{center}
    \centerline{\includegraphics[width=\textwidth]{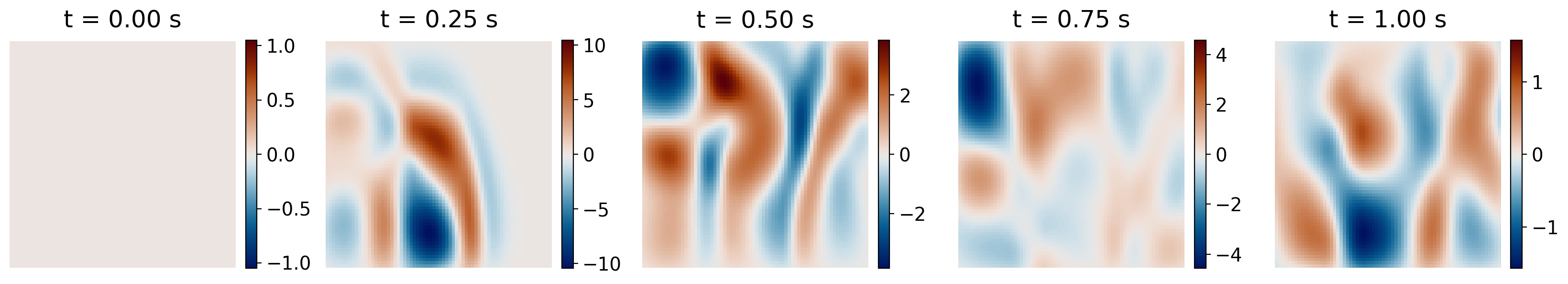}}
    \caption{Example trajectory of acoustic wave propagation across a multi-layer velocity map, illustrating complex reflections and refractions.}
    \label{fig:seismictraj}
  \end{center}
    \vskip -0.2in
\end{figure*}

\subsection{Evaluation Metrics}
The generated trajectories are partitioned into 80\%/10\%/10\% splits for training, validation, and testing. During training and validation, we may subsample spatial query points as described in Section~\ref{sec:method}; at test time, we evaluate reconstructions on all available spatial points. For the KS and FlowAO cases, we report the mean relative $\ell_2$ error for each state variable,
\begin{equation}
    \epsilon = \frac{1}{N_{\text{test}} N_t}\sum_{j=1}^{N_{\text{test}}}\sum_{t=1}^{N_t} \frac{\|\bsx_{j,t}-\hat{\bsx}_{j,t}\|_2}{\|\bsx_{j,t}\|_2},
\end{equation}
where $\|\cdot\|_2$ is the Euclidean norm over the spatial discretization and $N_{\text{test}}$ is the number of test trajectories. Due to the wide dynamic range of the wavefields, we report the following overall relative Frobenius error for the SWE and Seismic cases,
\begin{equation}
    \epsilon = \frac{\|\bsx-\hat{\bsx}\|_F}{\|\bsx\|_F},
\end{equation}
where $\bsx$ and $\hat{\bsx}$ are matrices formed by stacking the true and predicted spatial fields (over time and space) of the test samples.

\subsection{Numerical Results}

Table~\ref{tab:modelcomp} summarizes reconstruction errors and parameter counts for our method and baselines across the four datasets. STRIDE-SIREN uses the same STRIDE architecture but replaces the INR backbone with a modulated SIREN. For FlowAO, we observe only the horizontal velocity component at sensor locations and reconstruct both velocity components; we report the error of the horizontal component. For SWE, we observe only the surface elevation and reconstruct the elevation and both velocity components; we report the error of the surface elevation. For STRIDE-FMMNN, we report parameters as (trainable/all) due to fixed random weights in FMMNN. Model hyperparameters are selected using Bayesian optimization on the validation set. To ensure a fair comparison of the decoder, all models use the same sensor measurements, time lag $k$, and latent dimension ${d_z}$. Please refer to Appendix~\ref{hyper} for details on hyperparameter search and training.

STRIDE-FMMNN achieves the lowest errors across all datasets. SHRED performs well on KS but degrades as the spatial dimensionality increases, since the parameter count is dominated by the last layer of its shallow decoder and scales with the output dimension. SHRED-ROM reduces parameters via Proper Orthogonal Decomposition (POD) but incurs larger errors in intrinsically high-dimensional systems. Across the four datasets, POD achieves mean relative reconstruction errors of 0.57\%, 0.45\%, 0.80\%, and 1.47\% on the test sets, employing 20, 150, 400, and 2000 modes, respectively. STRIDE-SIREN is a competitive baseline, but is consistently outperformed by STRIDE-FMMNN at comparable model sizes, especially in wave propagation problems.

\begin{table*}[htb]
  \caption{Model comparison of parameter count and reconstruction error ($\epsilon$) per dataset.}
  \label{tab:modelcomp}
  \begin{center}
    \begin{small}
      \begin{sc}
        \begin{tabular}{c c c c c c} 
            \toprule
            Model & dataset $\rightarrow$ & \emph{KS} & \emph{FlowAO} & \emph{SWE (128$\times$128)} & \emph{Seismic} \\
            \midrule
            \multirow{2}{*}{SHRED} & \#parameters & 425K & 32.5M & 20.1M & 3.09M \\
                         & Error $\epsilon$       & \underline{6.20e-2} & 4.13e-2 & 1.75e-1 & 1.67e-1 \\
            \midrule
            \multirow{2}{*}{SHRED-ROM} & \#parameters & 393K & 395K & 917K & 1.93M \\
                         & Error $\epsilon$        & 6.28e-2 & 6.53e-2 & 5.14e-1 & 4.87e-1 \\
            \midrule
            \multirow{2}{*}{\shortstack{STRIDE-\\SIREN}} & \#parameters & 448K & 121K & 397K & 1.49M \\
                         & Error $\epsilon$        & 6.34e-2 & \underline{3.12e-2} & \underline{4.62e-2} & \underline{1.12e-1} \\
            \midrule
            \multirow{2}{*}{\shortstack{STRIDE-\\FMMNN}} & \#parameters & 348K/484K & 103K/149K & 337K/407K & 1.39M/1.80M \\
                         & Error $\epsilon$        & \textbf{3.09e-2} & \textbf{2.97e-2} & \textbf{2.78e-2} & \textbf{8.41e-2} \\
            \bottomrule
        \end{tabular}
      \end{sc}
    \end{small}
  \end{center}
\end{table*}

The comparison among different models holds for a more challenging SWE variant. As shown in Table~\ref{tab:newSWE}, STRIDE-FMMNN remains the best-performing model when we double the simulated physical time (while keeping $N_t=201$) so that wave reflections introduce more complexity near the end of the trajectories.

\begin{table}[htb]
  \caption{Model comparison of parameters and error on the extended SWE (128$\times$128) dataset.}
  \label{tab:newSWE}
  \begin{center}
    \begin{small}
      \begin{sc}
        \begin{tabular}{l cccc} 
            \toprule
            Model & SHRED & SHRED-ROM & STRIDE-SIREN & STRIDE-FMMNN \\
            \midrule
            \#parameters     & 20.1M   & 917K    & 1.45M   & 918K/1.53M \\
            Error $\epsilon$ & 20.25\% & 57.09\% & \underline{2.89\%}  & \textbf{2.51\%} \\
            \bottomrule
        \end{tabular}
      \end{sc}
    \end{small}
  \end{center}
\end{table}

Fig.~\ref{fig:gtpred} compares STRIDE-FMMNN and the baselines on a test snapshot from extended (double in time) SWE (128$\times$128). As waves bounce back and forth in the physical domain, the surface elevation field $\eta$ becomes highly structured near the final time step. Given the same sparse observations, STRIDE-FMMNN substantially improves reconstruction accuracy over SHRED and STRIDE-SIREN.

\begin{figure*}[htb]
  \begin{center}
    \centerline{\includegraphics[width=\textwidth]{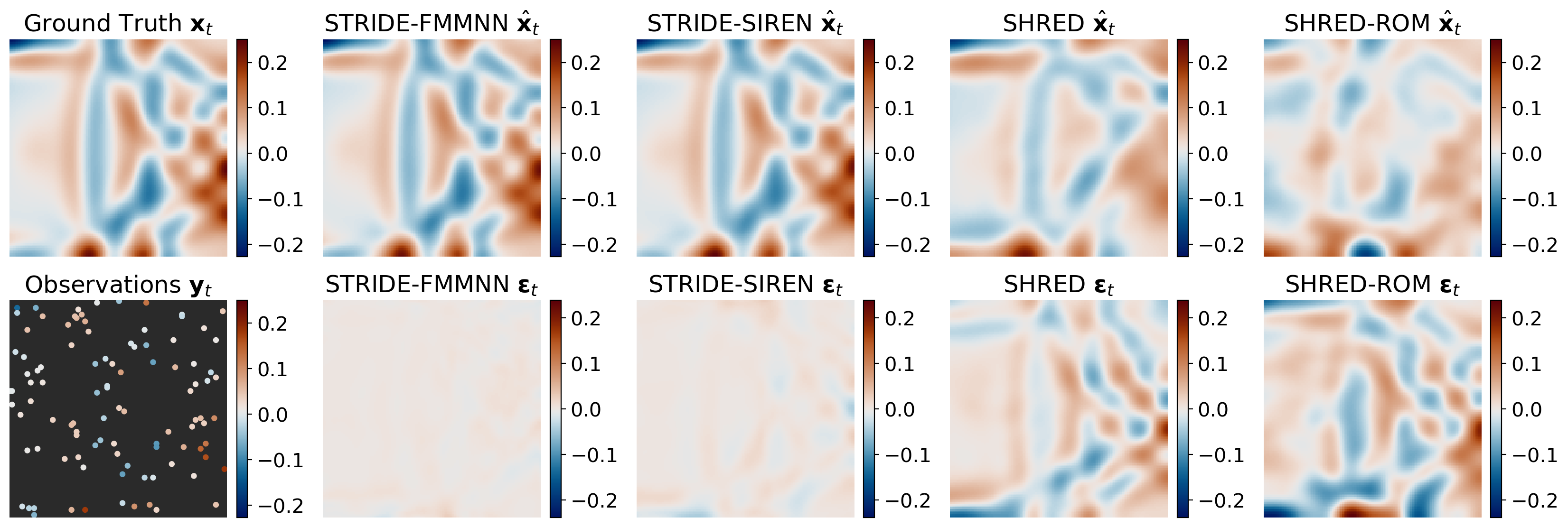}}
    \caption{Visualization of the surface elevation $\eta$ in the extended SWE (128$\times$128) case. The first column shows the ground truth at the last (200th) time step in an example test trajectory, along with observations at 100 random locations used as model input. The next columns show the prediction and error of STRIDE-FMMNN and baseline models for the same snapshot.}
    \label{fig:gtpred}
  \end{center}
    \vskip -0.2in
\end{figure*}

Fig.~\ref{fig:Seismicgtpred} provides a visual comparison between STRIDE-FMMNN and baseline models for a representative test snapshot from the Seismic dataset. The superiority of the proposed framework is particularly evident at the final time step, where the wavefield develops highly complex interference patterns resulting from cumulative refraction and reflection across multi-layered media.

\begin{figure*}[htb]
  \begin{center}
    \centerline{\includegraphics[width=\textwidth]{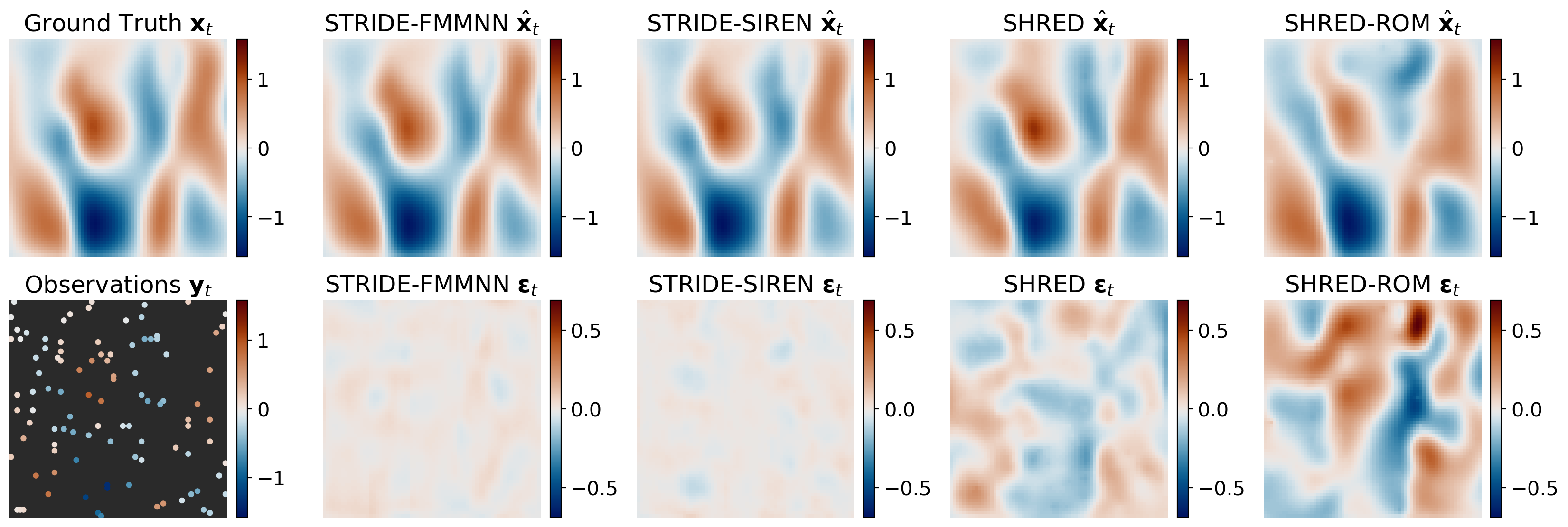}}
    \caption{Visual comparison of acoustic wave reconstruction under complex interference. (Left) Ground truth at $t=200$ with 100 random sparse observations. (Right) Predicted wavefields and corresponding error maps for STRIDE-FMMNN and baselines, highlighting our model's ability to capture intricate reflection and refraction patterns.}
    \label{fig:Seismicgtpred}
  \end{center}
    \vskip -0.2in
\end{figure*}

As shown in Fig.~\ref{fig:errplot}, STRIDE-FMMNN outperforms SHRED, STRIDE-SIREN, and SHRED-ROM at each time step for the KS and SWE cases. While all models exhibit larger errors at early times (limited history), STRIDE-FMMNN maintains a lower error as predictions become more challenging in chaotic dynamics (KS) and as spatial fields become increasingly complex (SWE).

\begin{figure}[htb]
  \begin{center}
    \centerline{\includegraphics[width=0.45\columnwidth]{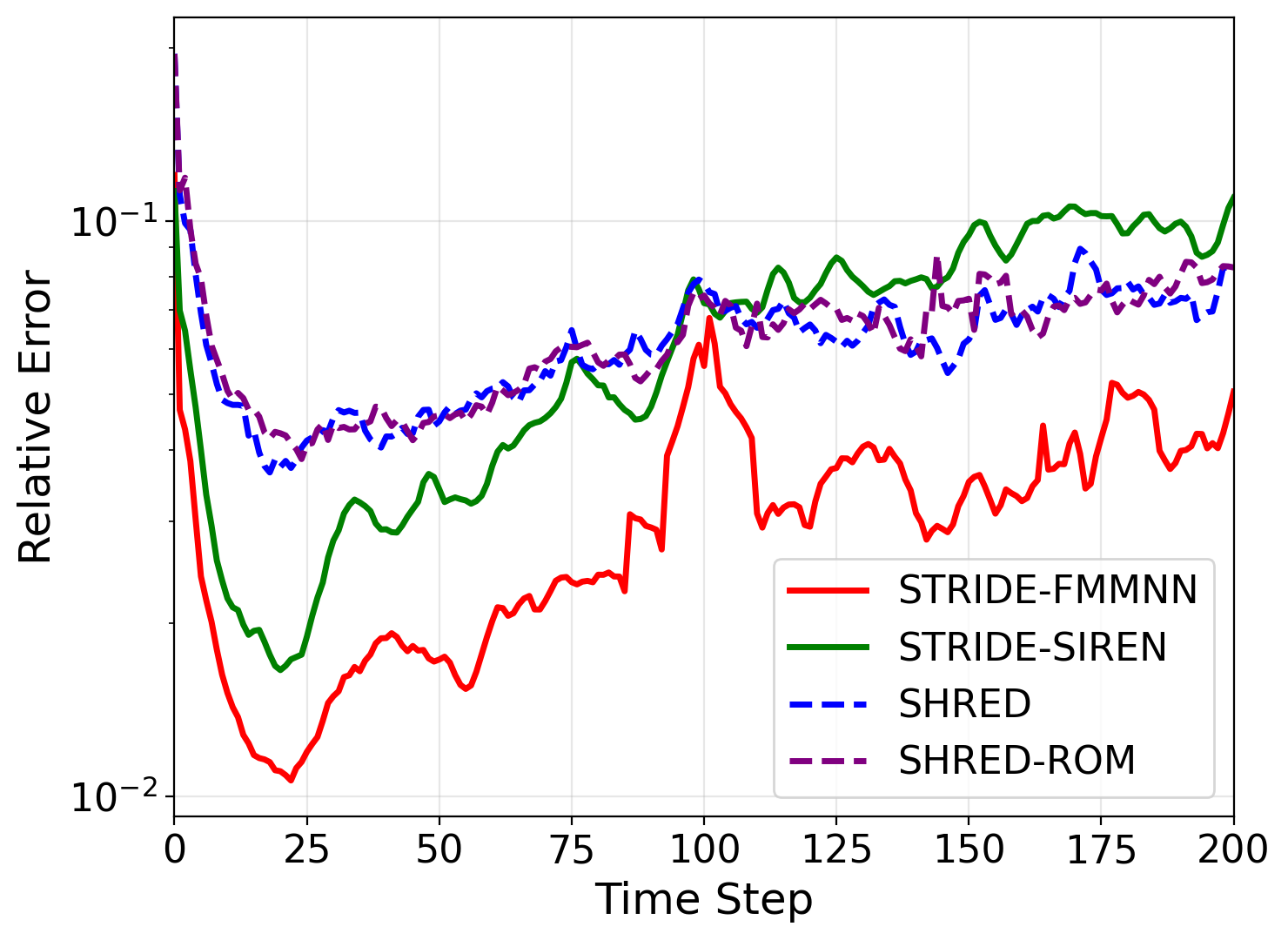}
    \includegraphics[width=0.45\columnwidth]{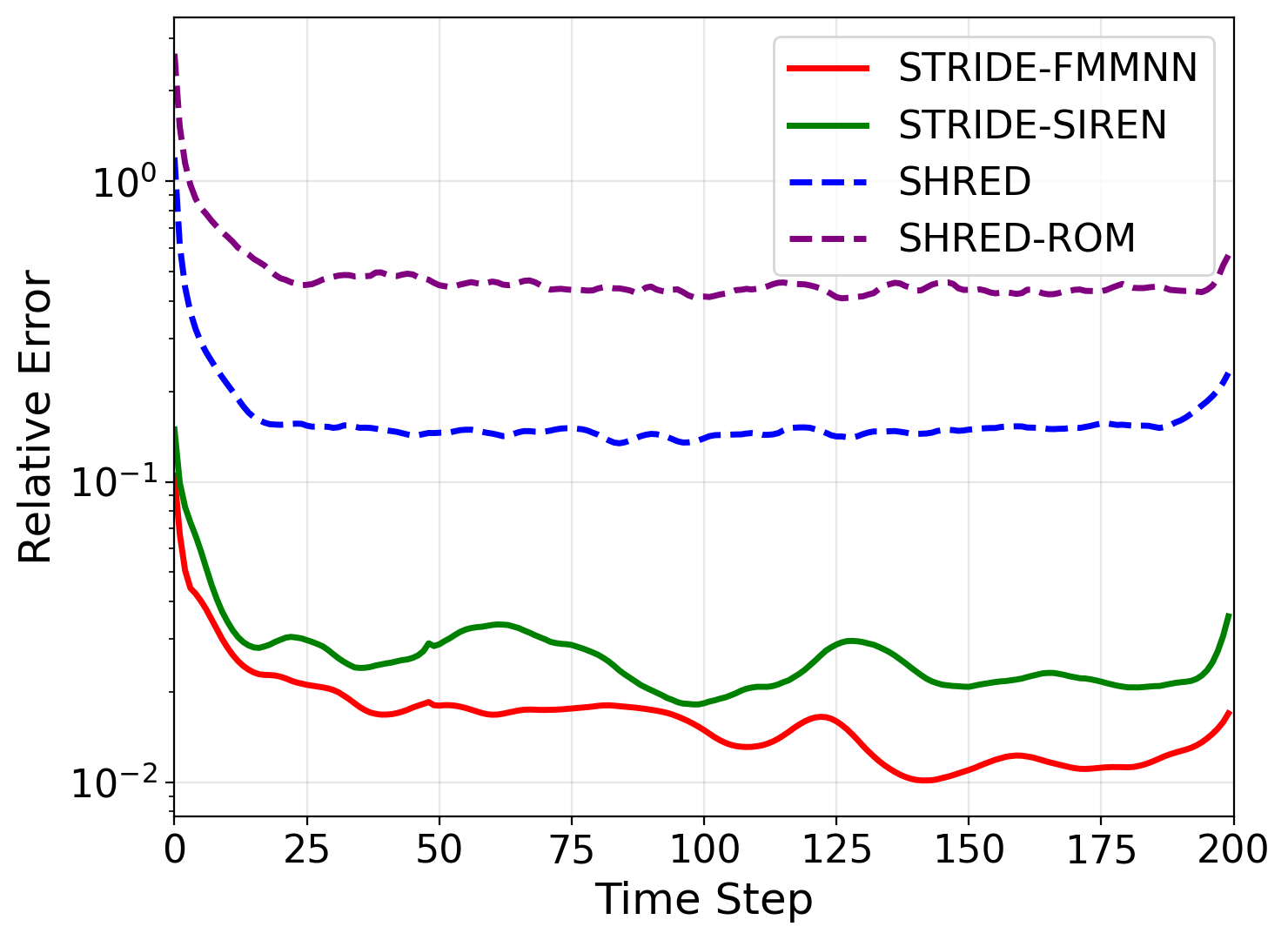}}
    \caption{Mean relative prediction error along test trajectories for the KS (left) and SWE (128$\times$128) (right) cases, comparing STRIDE-FMMNN and STRIDE-SIREN with baselines.}
    \label{fig:errplot}
  \end{center}
    \vskip -0.2in
\end{figure}

Fig.~\ref{fig:latent} shows representative latent state trajectories produced by STRIDE-FMMNN on FlowAO and Seismic. In FlowAO, the latent states exhibit clear periodic behavior after the initial burn-in (time lag $k$), consistent with periodic wake dynamics. In Seismic, the latent magnitude decays over time, matching the attenuation of the wavefield as it reaches the boundaries and is absorbed by the damping mask.

\begin{figure}[htb]
  \begin{center}
    \centerline{\includegraphics[width=0.45\columnwidth]{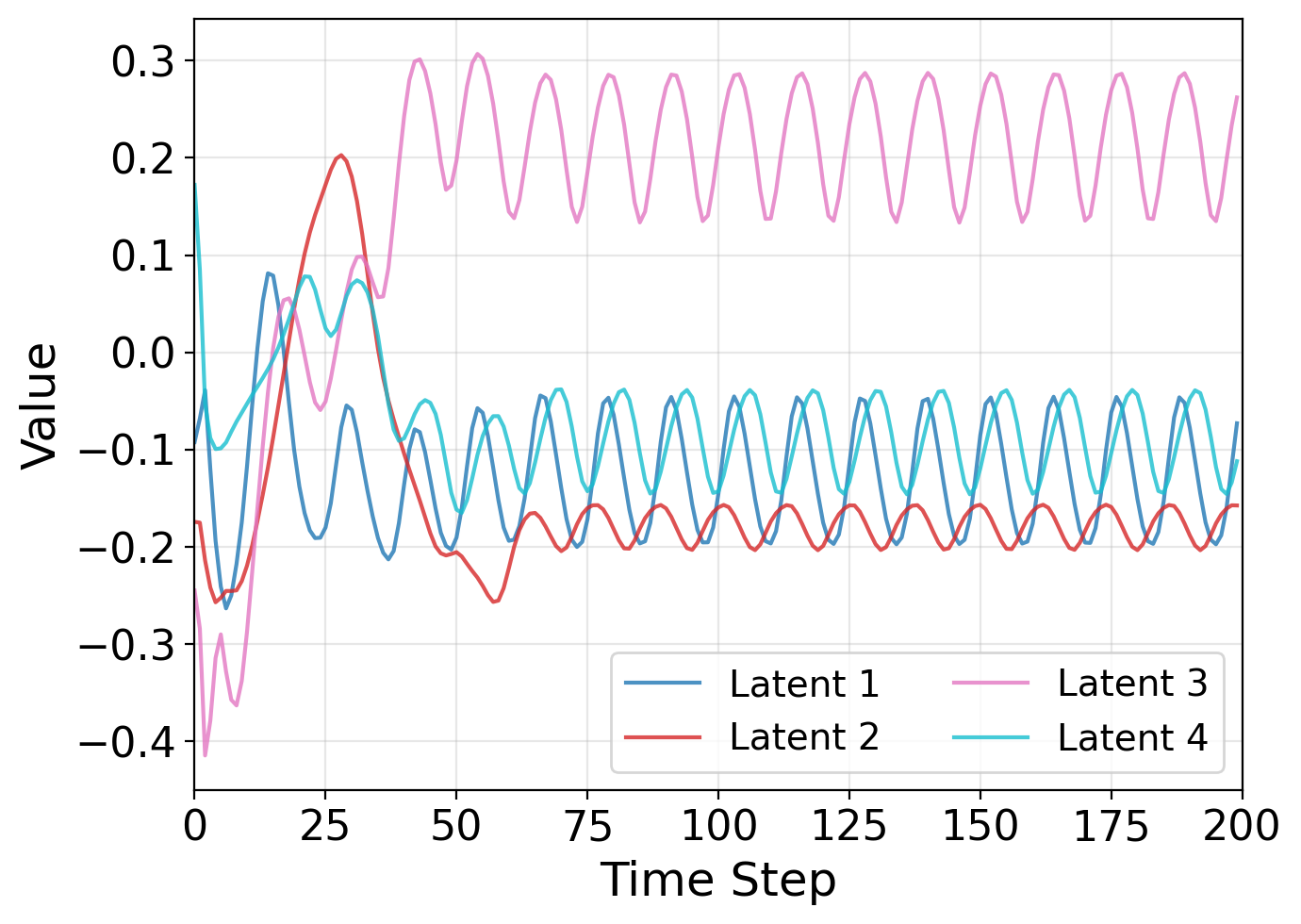}
    \includegraphics[width=0.45\columnwidth]{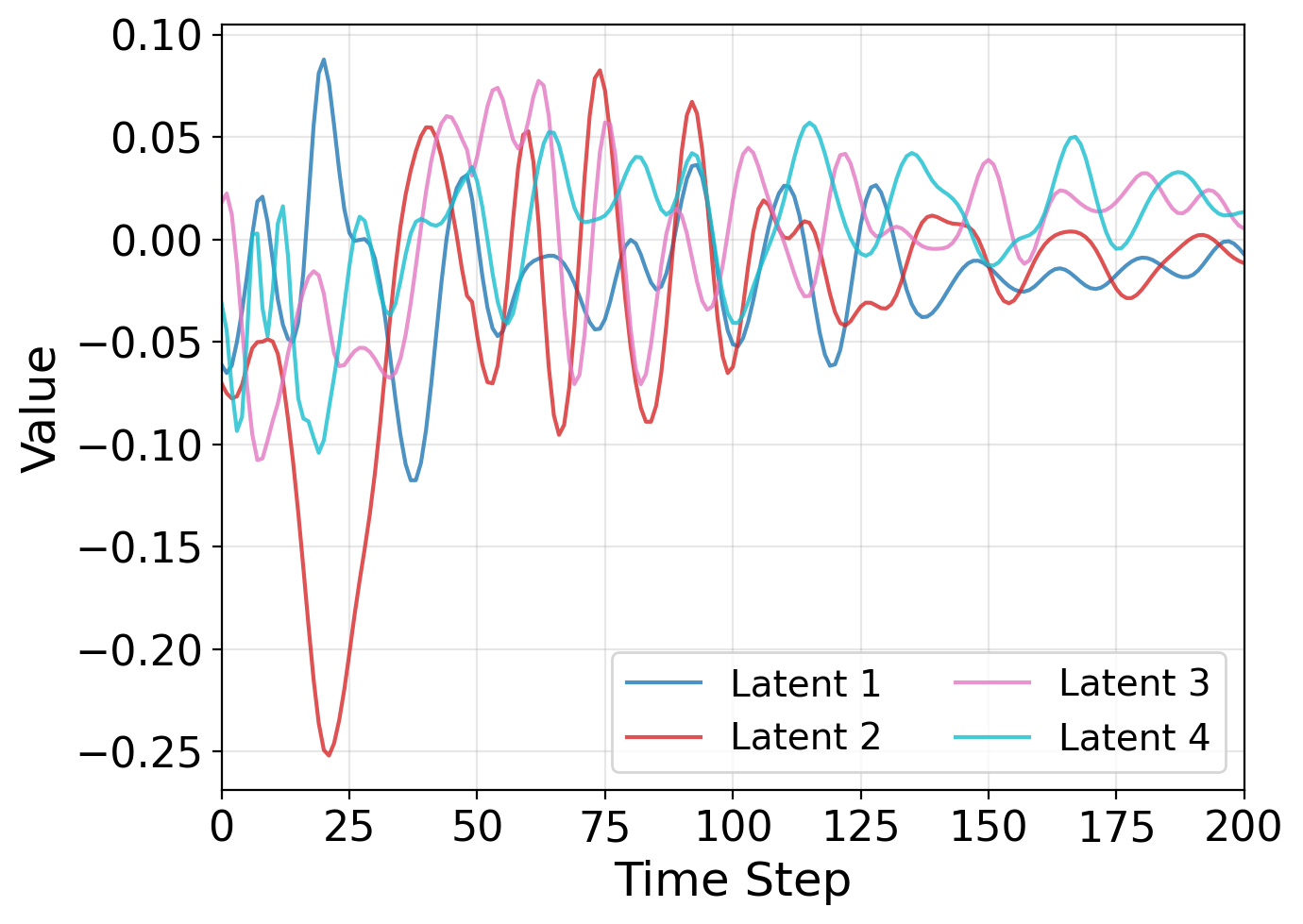}}
    \caption{The first four latent states of STRIDE-FMMNN in an example test trajectory for the FlowAO (left) and Seismic (right) cases.}
    \label{fig:latent}
  \end{center}
    \vskip -0.2in
\end{figure}

Beyond improved accuracy, STRIDE-FMMNN supports discretization- and resolution-invariant training via coordinate-based decoding and randomized spatial sampling. We consider a more challenging scenario where the standard deviation of the initial Gaussian bump is halved, effectively doubling the characteristic frequency of the wavefield. Here, we generate low-resolution data by downsampling the high-resolution simulation data, and arrange the sensors on a uniform grid to ensure consistent evaluation. As shown in Table~\ref{tab:resolutionhighfreq}, STRIDE-FMMNN achieves consistent performance across resolutions in standard settings, provided the model size and training budget (fixed at $N_\xi=4,096$ query points per snapshot) remain constant. In super-resolution settings (training at low resolution and testing at higher resolution), STRIDE-FMMNN maintains similar errors without retraining. Specifically, when super-resolving from 64 to 128, the relative errors are comparable to those obtained by training directly on the 128-resolution dataset. Furthermore, this approach significantly outperforms baseline methods that rely on bilinear interpolation to upsample low-resolution predictions.

\begin{table}[htb]
  \caption{Comparison of STRIDE-FMMNN prediction errors (\%) against interpolation baselines for the extended high-frequency SWE case on standard and super-resolution tasks.}
  \label{tab:resolutionhighfreq}
  \begin{center}
    \begin{small}
      \begin{sc}
        \begin{tabular}{c cc cc} 
            \toprule
            \multirow{2}{*}{Var.} & \multicolumn{2}{c}{Standard} & Super-Res & Interp. \\
            \cmidrule(lr){2-3} \cmidrule(lr){4-4} \cmidrule(lr){5-5}
            & 64 & 128 & 64 $\to$ 128 & 64 $\to$ 128 \\
            \midrule
            $u$      & 6.89 & 7.02 & 6.97 & 8.02 \\
            $v$      & 5.15 & 5.03 & 5.29 & 6.66 \\
            $\eta$   & 6.27 & 6.57 & 6.32 & 7.57 \\
            \bottomrule
        \end{tabular}
      \end{sc}
    \end{small}
  \end{center}
\end{table}

The FMMNN backbone in STRIDE-FMMNN yields more stable optimization and faster convergence than STRIDE-SIREN. Fig.~\ref{fig:trainingdyn} visualizes the training dynamics of three runs with STRIDE-FMMNN and STRIDE-SIREN under comparable model sizes and identical optimizer/scheduler settings. With a learning rate of 2e-3, STRIDE-FMMNN converges within 300 epochs, while STRIDE-SIREN converges (by validation loss) in about 700 epochs with a learning rate of 1e-4. With a larger learning rate for STRIDE-SIREN (e.g., 2e-4), we observe an early spike that leads to non-convergence. Finally, STRIDE-FMMNN exhibits higher training loss but lower validation loss at convergence, suggesting reduced overfitting due to its low-rank, partially trainable parameterization.

\begin{figure}[htb]
  \begin{center}
    \centerline{\includegraphics[width=0.32\columnwidth]{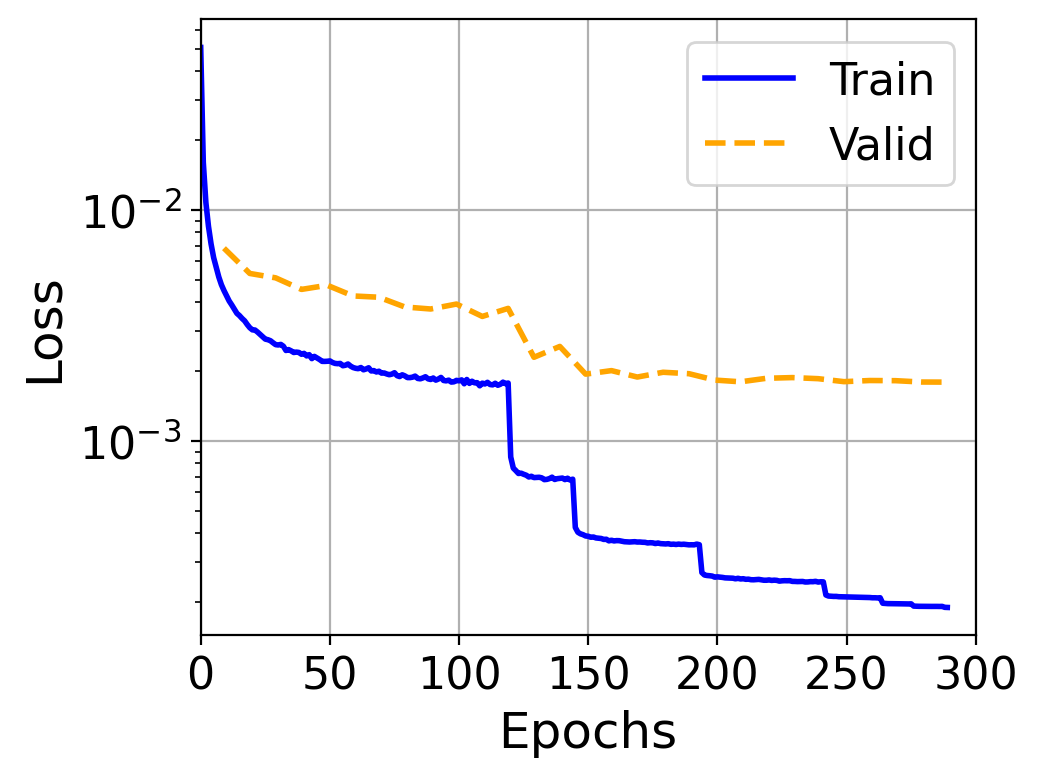}
    \includegraphics[width=0.32\columnwidth]{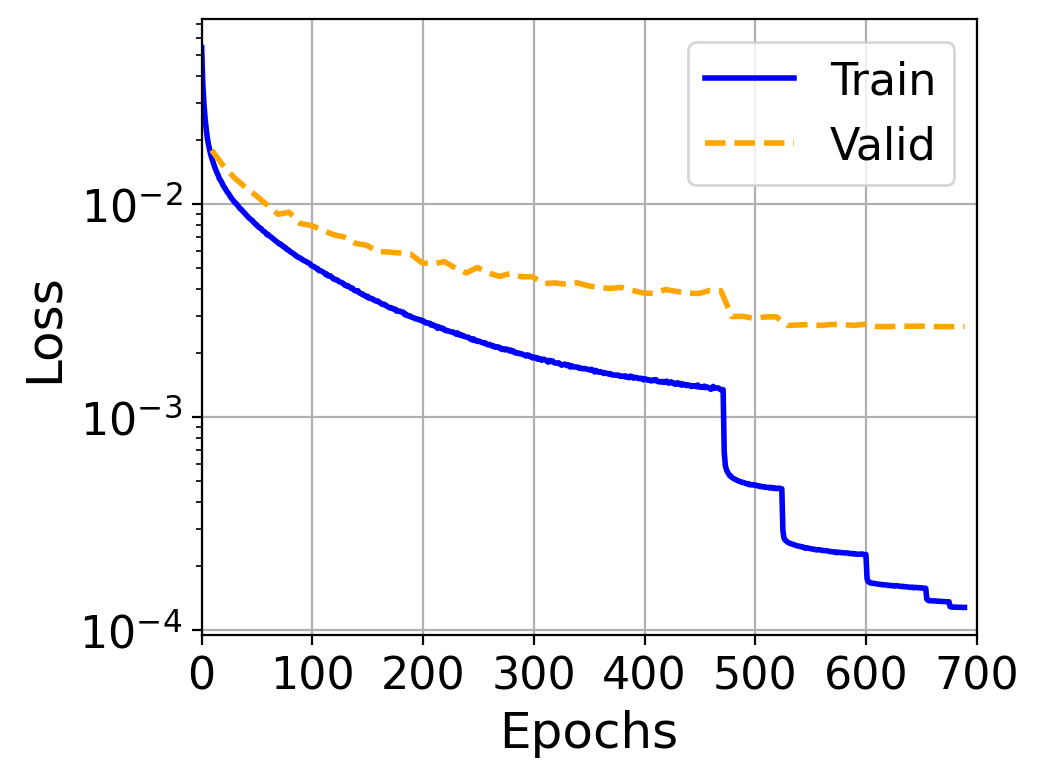}
    \includegraphics[width=0.32\columnwidth]{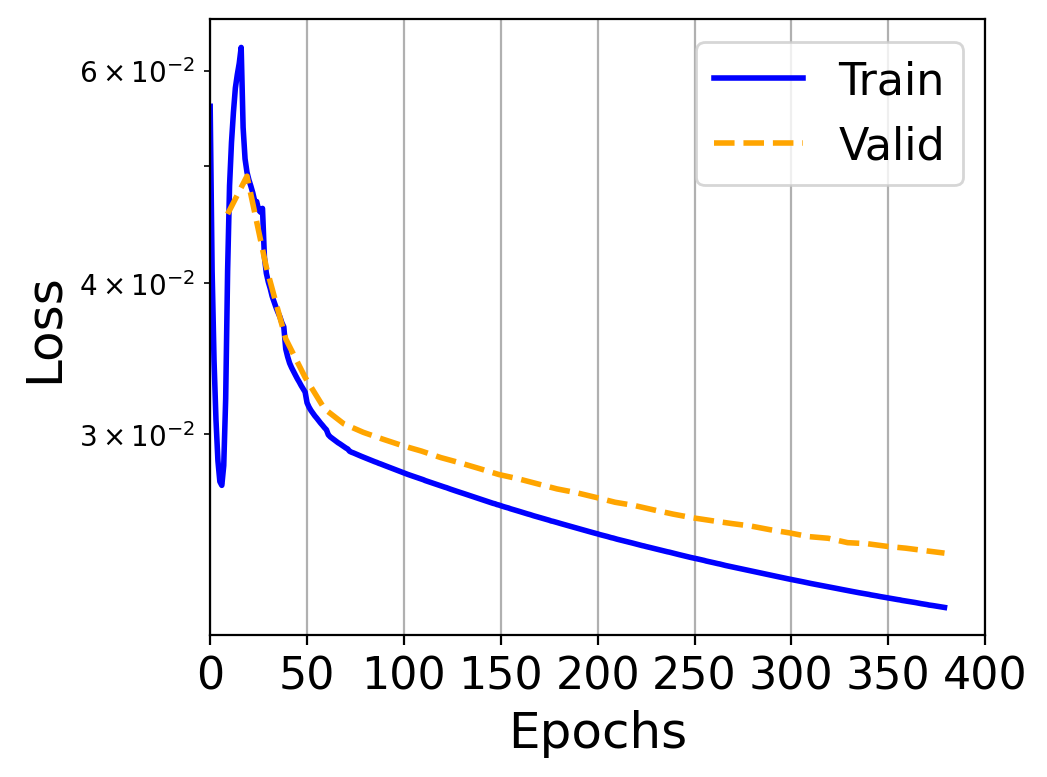}}
    \caption{Training dynamics of STRIDE-FMMNN (left) and STRIDE-SIREN (middle and right) for the KS case. The three trainings use the same optimizer and scheduler, with learning rates set to 2e-3, 1e-4, and 2e-4, respectively.}
    \label{fig:trainingdyn}
  \end{center}
    \vskip -0.2in
\end{figure}

Fig.~\ref{fig:ablation} shows the impact of time lag and number of random sensors on the overall relative error of STRIDE-FMMNN on the SWE (128$\times$128) dataset. Empirically, the relative error decreases with longer time lags and more random sensors. This validates the choice of time lag and number of sensors as discussed in Section~\ref{sec:theory}.

\begin{figure}[htb]
  \begin{center}
    \centerline{\includegraphics[width=0.45\columnwidth]{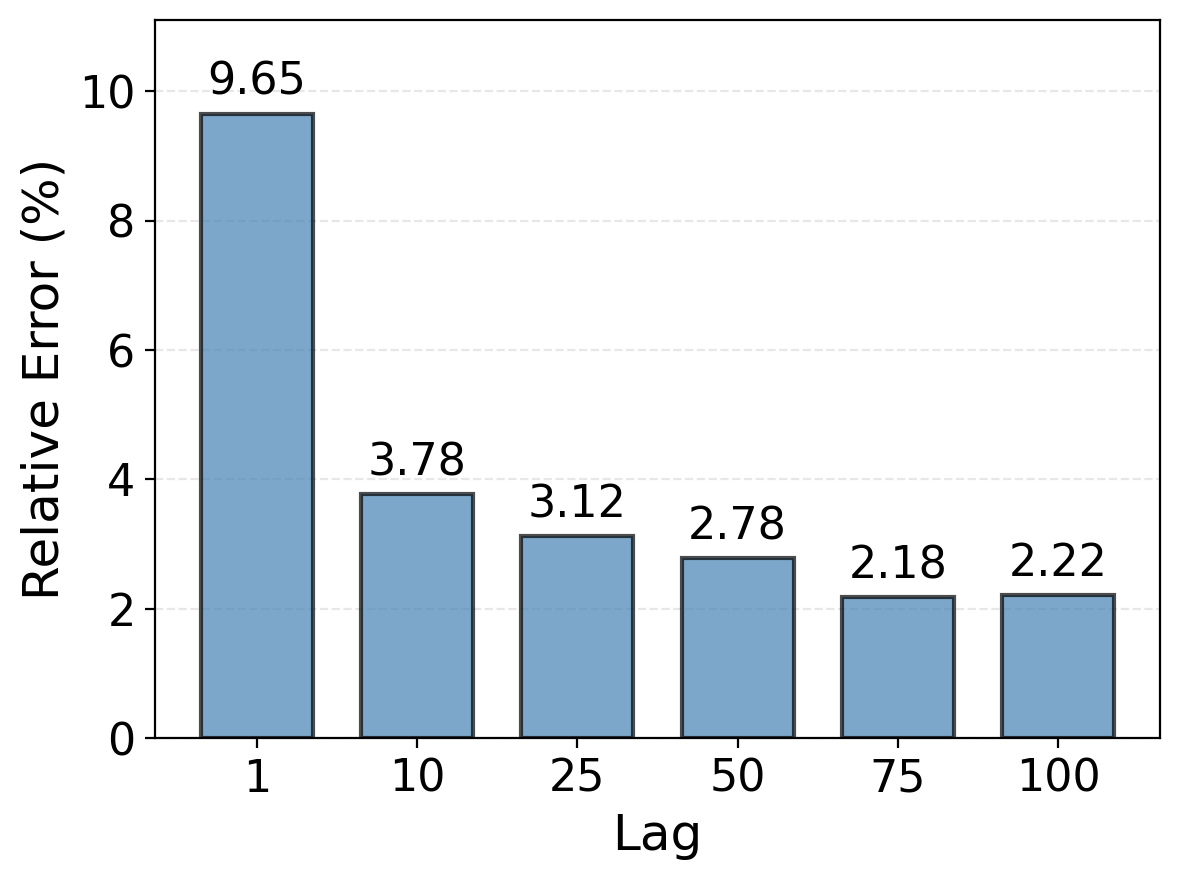}
    \includegraphics[width=0.45\columnwidth]{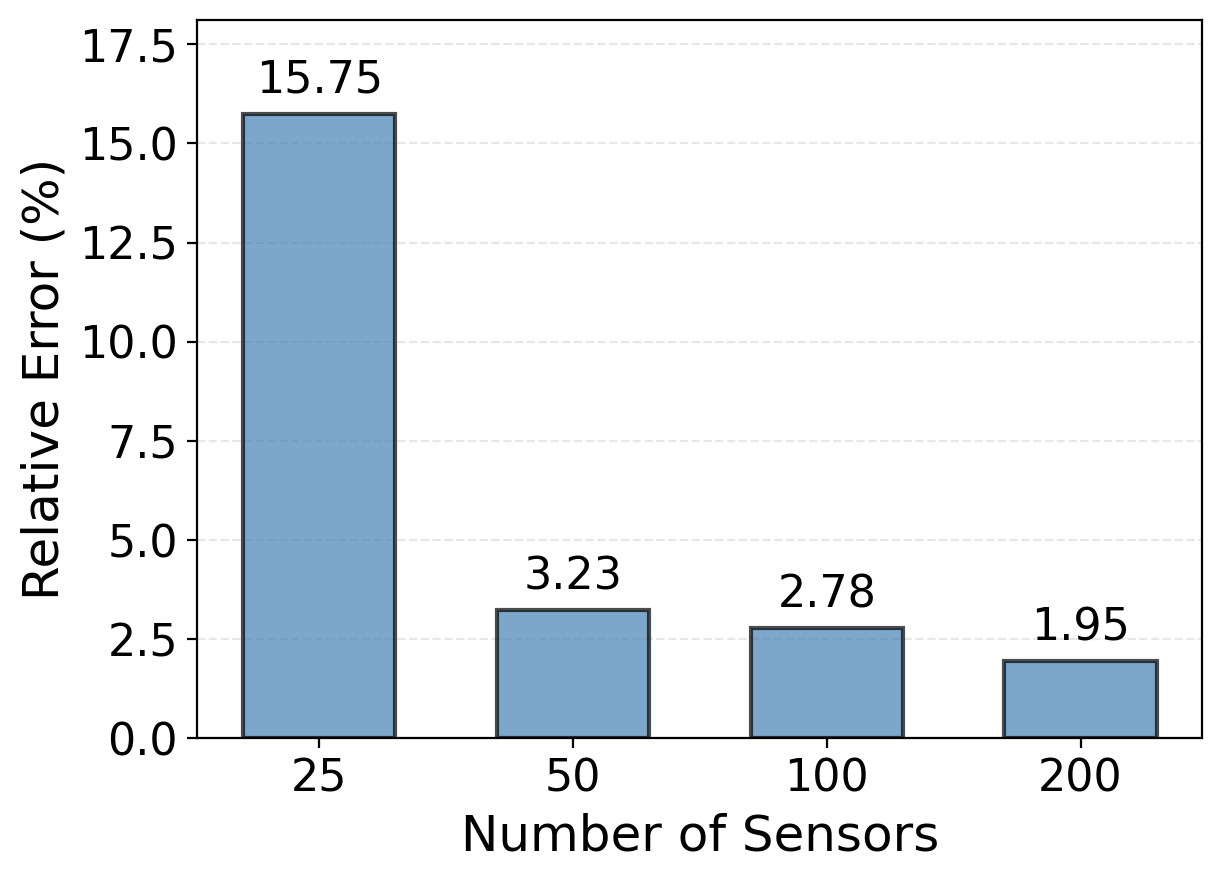}
    }
     \caption{Ablation studies of STRIDE-FMMNN across time lags and the number of sensors on the SWE (128$\times$128) dataset.}
    \label{fig:ablation}
  \end{center}
  \vskip -0.2in
\end{figure}

Table~\ref{tab:noise} evaluates the robustness of STRIDE-FMMNN against different observation noise levels. We define the noise level as the standard deviation of the applied Gaussian noise relative to the standard deviation of the true observations. The results demonstrate the framework's robustness, yielding a remarkably low prediction error of 6.24\% under a high noise condition of 20\%. 

\begin{table}[htb]
  \caption{STRIDE-FMMNN prediction errors (\%) for $\eta$ with different observation noise levels (\%)}
  \label{tab:noise}
  \begin{center}
    \begin{small}
      \begin{sc}
        \begin{tabular}{c cccc} 
            \toprule
            Noise level & 0 & 5 & 10 & 20 \\
            \midrule
            Error     & 2.78   & 3.22    & 4.04   & 6.24 \\
            \bottomrule
        \end{tabular}
      \end{sc}
    \end{small}
  \end{center}
\end{table}

We compare different temporal encoders within STRIDE-FMMNN, using a fixed latent dimension and the same FMMNN decoder. The left panel of Fig.~\ref{fig:Mamba} shows that recurrent models (LSTM, GRU, and Mamba) substantially outperform MLP and self-attention variants in both accuracy and parameter efficiency. Mamba achieves the lowest prediction error with a model size comparable to LSTM. The right panel summarizes LSTM vs.\ Mamba across datasets: Mamba slightly outperforms LSTM in most cases with KS as an exception. This suggests that Mamba is a promising temporal encoder for extracting latent states.

\begin{figure}[htb]
  \begin{center}
    \centerline{\includegraphics[width=0.45\textwidth]{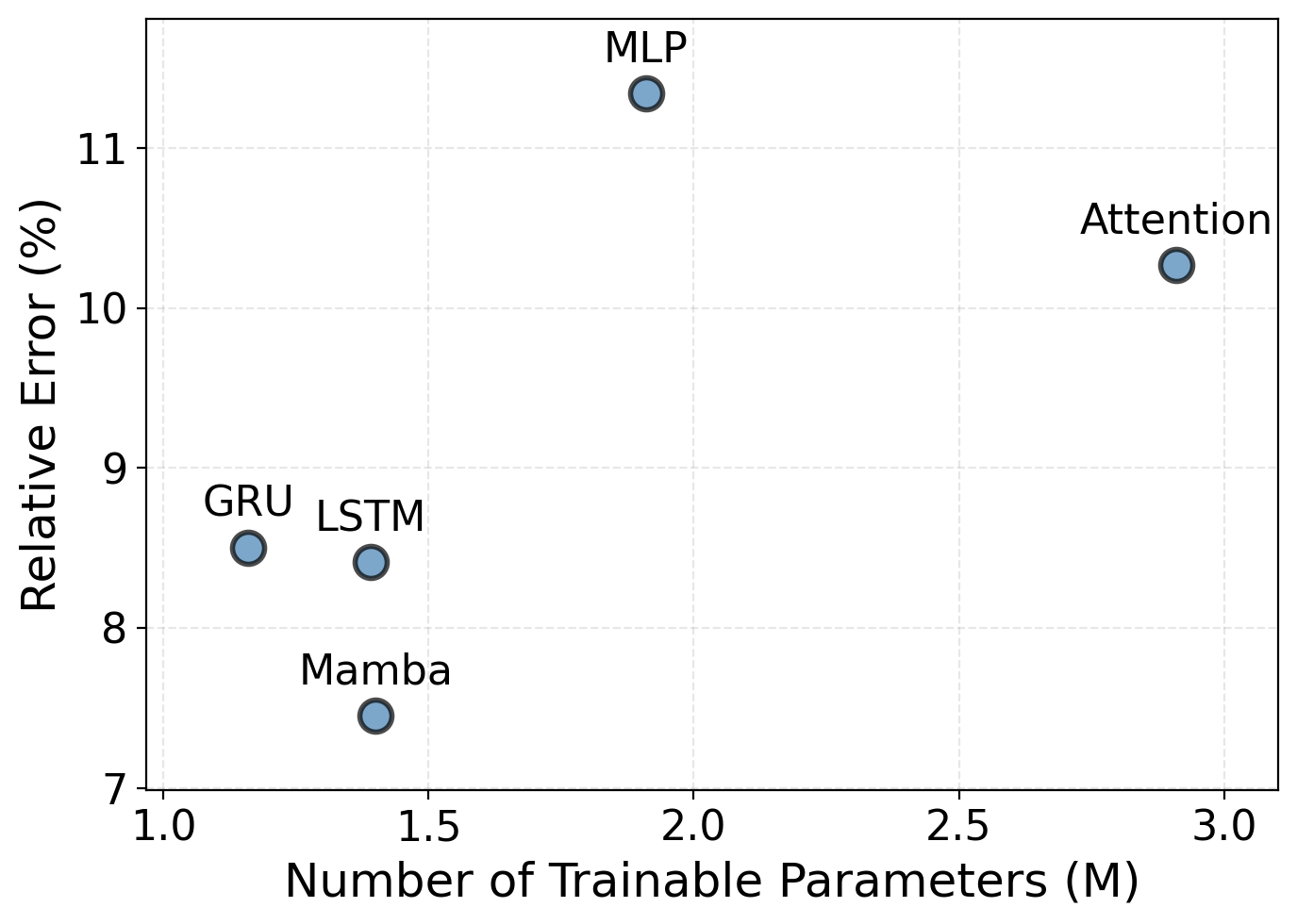}
    \includegraphics[width=0.45\textwidth]{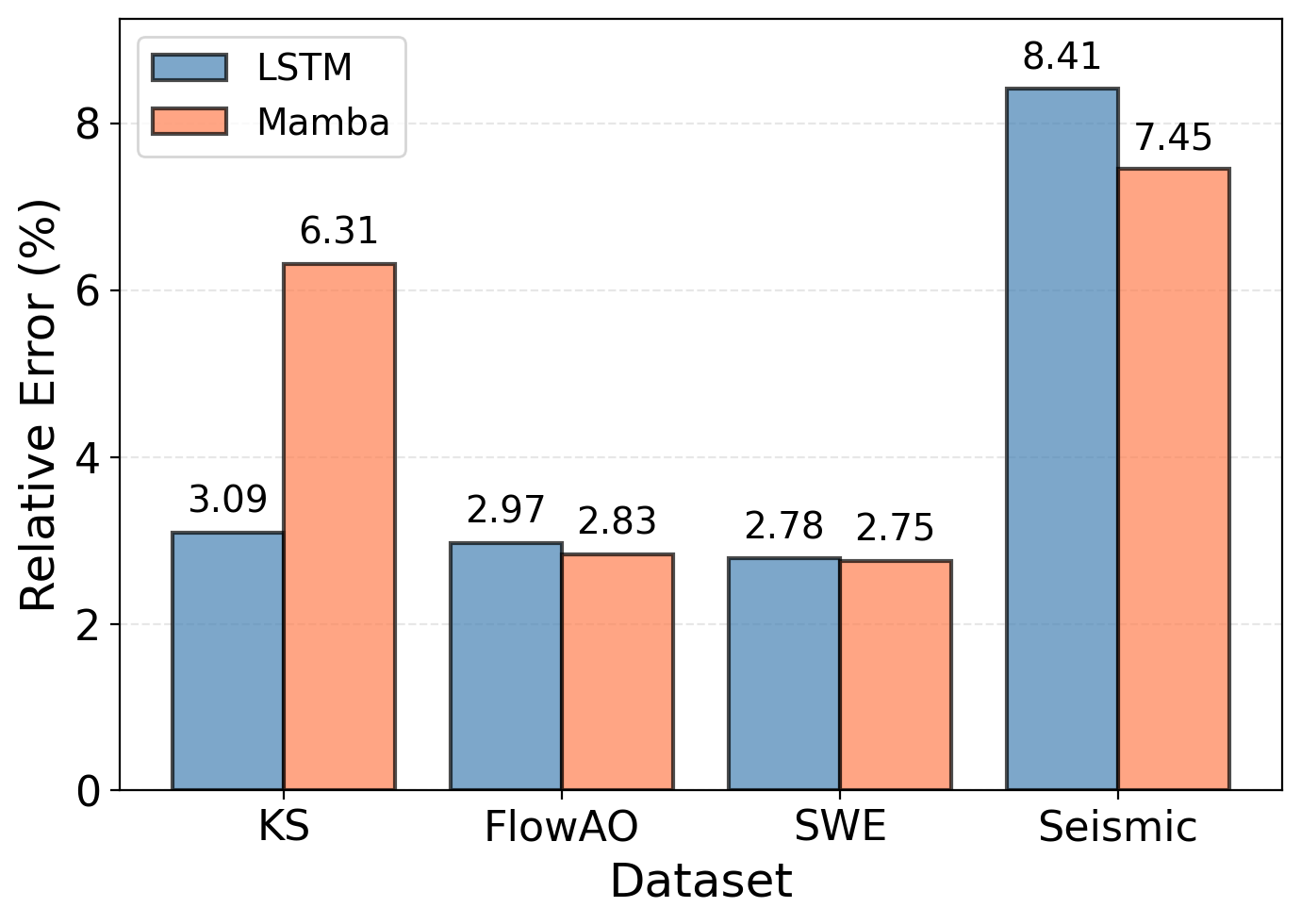}}
    \caption{Comparison of temporal models on the Seismic dataset (left) and prediction errors using LSTM or Mamba per dataset (right).}
    \label{fig:Mamba}
  \end{center}
  \vskip -0.2in
\end{figure}

Fig.~\ref{fig:ood} demonstrates the performance of STRIDE-FMMNN on both in-distribution and out-of-distribution (OOD) samples for the SWE case. When test samples are drawn within the training range, the proposed model achieves very low relative errors as previously reported. However, generalizing to unseen scenarios remains a significant challenge, even when the dynamics are simply flipped in the OOD samples. Prediction errors remain below 10\% only when the test samples are fairly close to the training range. For samples further outside this distribution, relative errors easily exceed 30\%.

\begin{figure}[htb]
  \begin{center}
    \centerline{\includegraphics[width=0.45\textwidth]{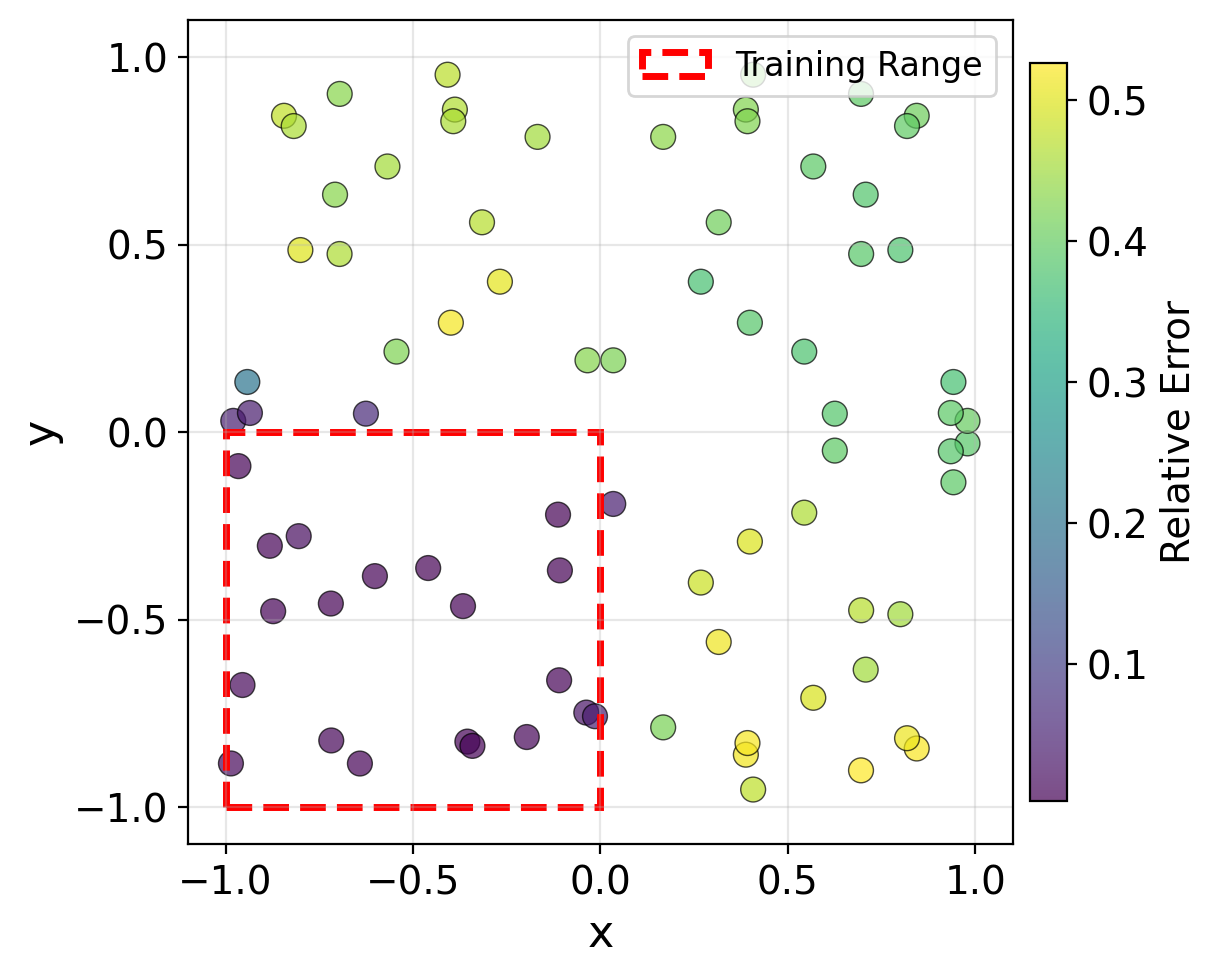}}
    \caption{Generalization analysis of STRIDE-FMMNN on the SWE dataset. The scatter plot displays the relative prediction error for surface elevation ($\eta$) across in-distribution and out-of-distribution samples. The coordinates of each point correspond to the center of the initial Gaussian bump. Training trajectories are exclusively initialized within the lower-left quarter.}
    \label{fig:ood}
  \end{center}
  \vskip -0.2in
\end{figure}

In addition to full-field reconstruction, the underlying control parameters can be accurately estimated from the informative latent states learned by STRIDE-FMMNN using a simple MLP. As illustrated in Fig.~\ref{fig:estimation}, the estimated parameters for the SWE and FlowAO cases closely match the ground truth in an example test trajectory. Among all the test trajectories, the mean absolute errors for the two parameters defining the location of the initial Gaussian bump (sampled from $[-1, 0] \times [-1, 0]$) are \textbf{1.47e-3} and \textbf{8.33e-4}. The mean absolute error of the time-varying angle of attack is \textbf{2.10e-2}.

\begin{figure*}[htb]
  \begin{center}
    \centerline{\includegraphics[width=0.33\textwidth]{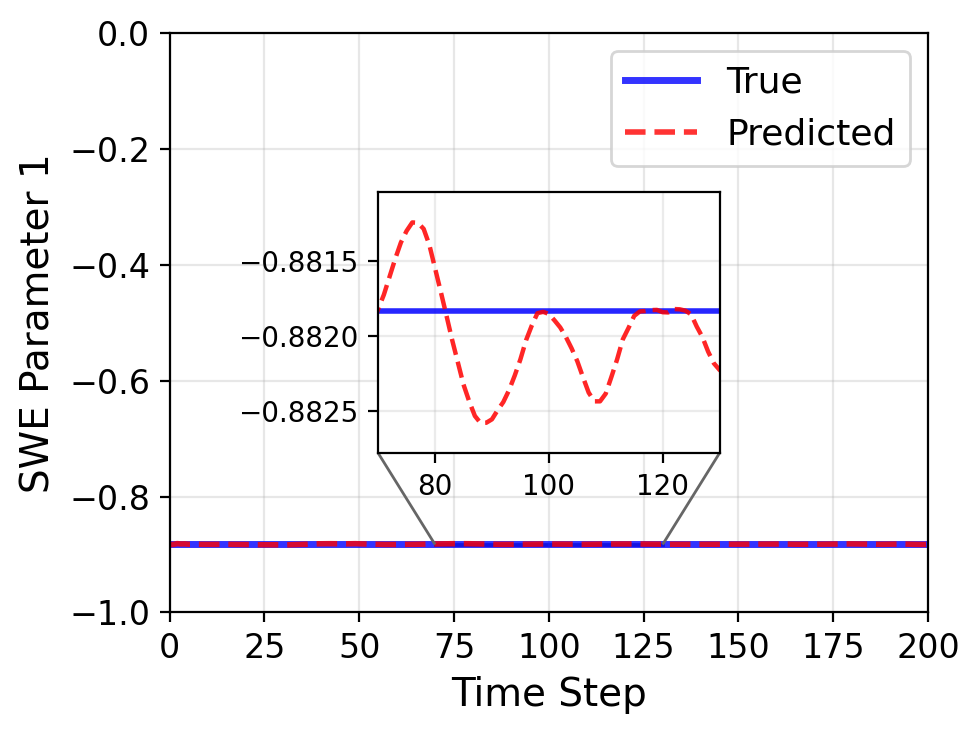}
    \includegraphics[width=0.33\textwidth]{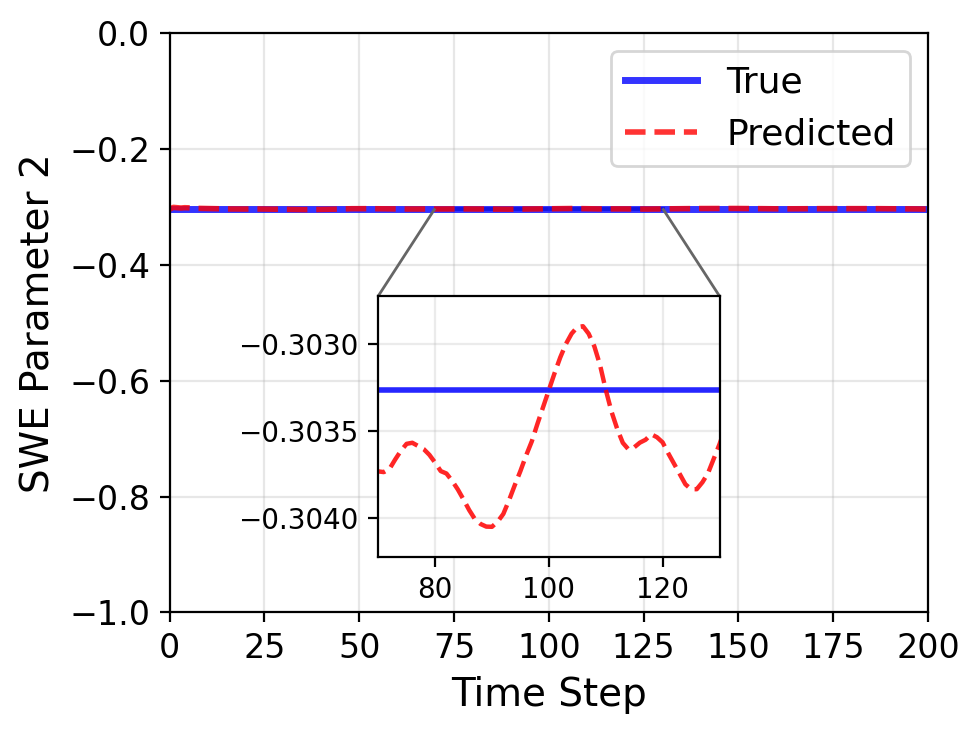}
    \includegraphics[width=0.33\textwidth]{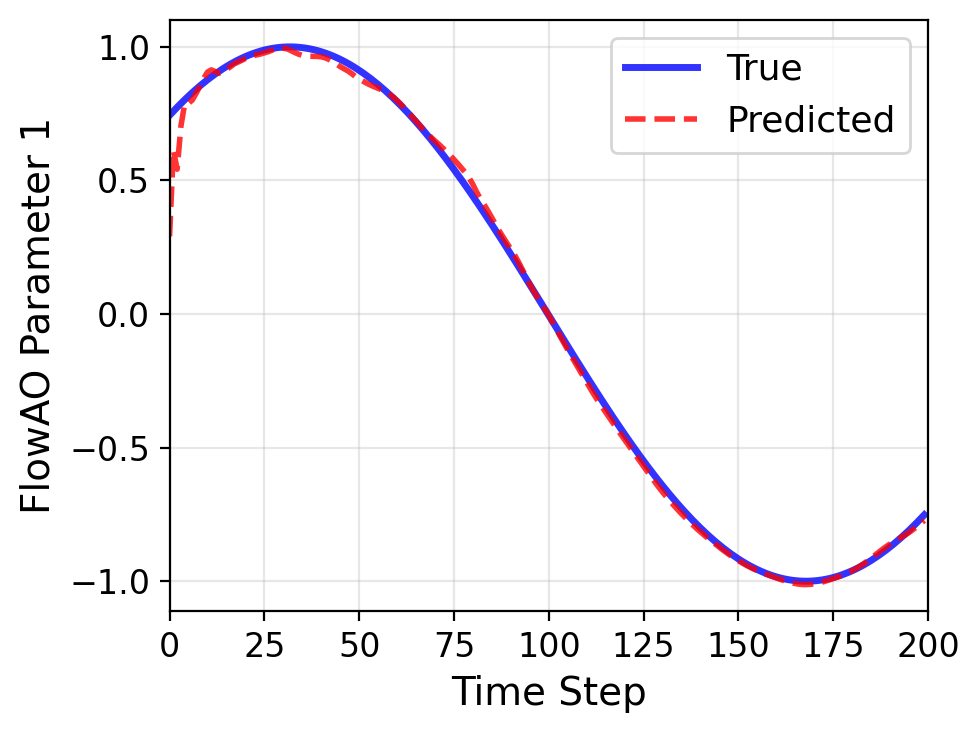}}
    \caption{Parameter estimation from STRIDE-FMMNN learned latent states. Left and Middle: Estimation of the initial Gaussian bump coordinates ($x$- and $y$-locations) in an example test trajectory for the SWE case. Right: Estimation of the time-varying angle of attack for the FlowAO case.}
    \label{fig:estimation}
  \end{center}
  \vskip -0.2in
\end{figure*}

Fig.~\ref{fig:forecasting} shows the performance of auto-regressive forecasting using STRIDE-FMMNN. To extend the framework for future-state prediction, we train an auxiliary LSTM network on the time series of learned latent states. This enables the extrapolation of future latent trajectories, which are subsequently mapped back to full spatial fields via the pre-trained FMMNN decoder. In the SWE case, the model demonstrates robust short-term forecasting capabilities, maintaining relative errors of approximately 25\% across all three state variables after a horizon of 21 time steps. However, the Seismic case proves significantly more challenging. The forecasting error blows up quickly and approaches 100\% over the same 21-step interval.

\begin{figure}[htb]
  \begin{center}
    \centerline{\includegraphics[width=0.45\textwidth]{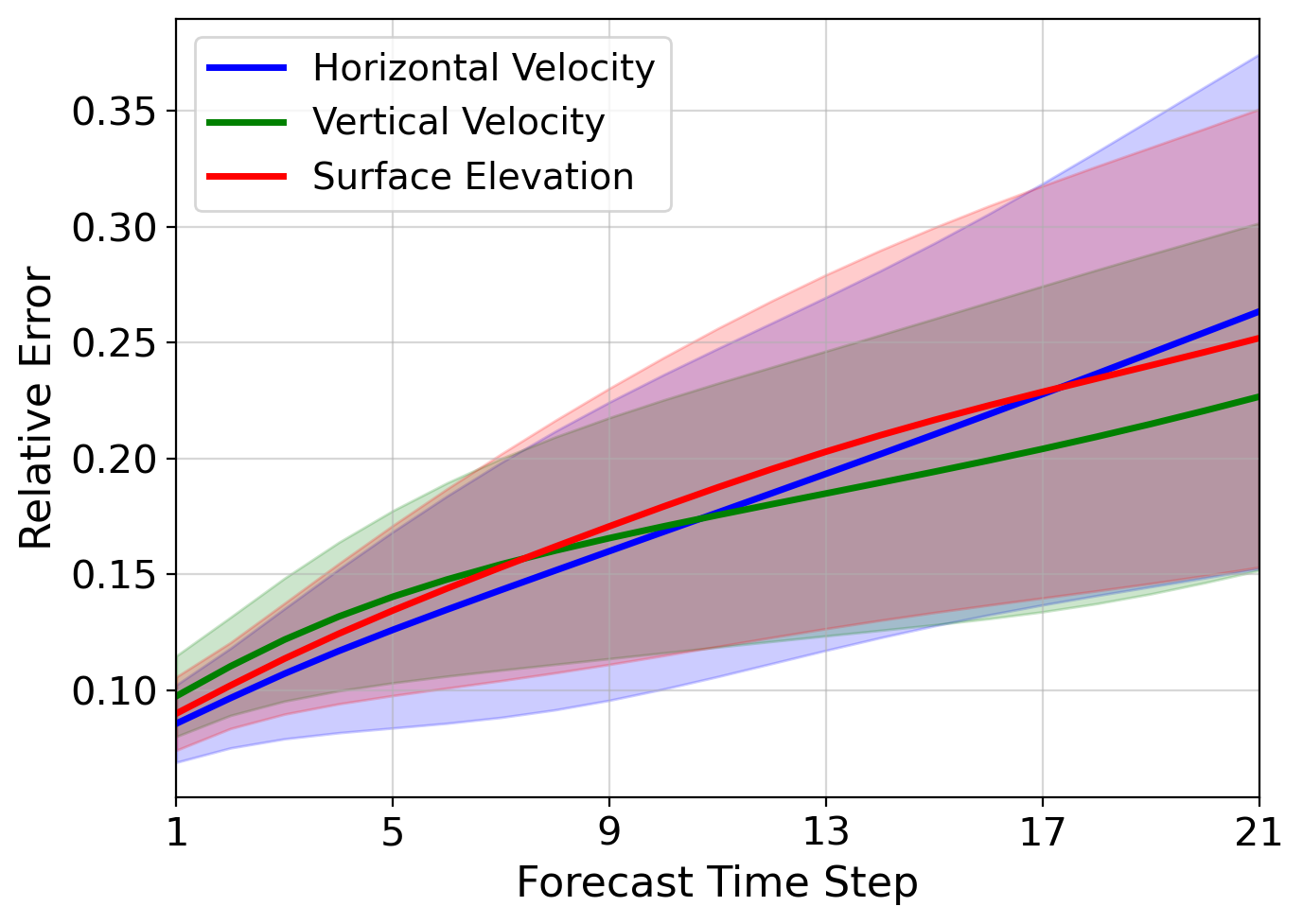}
    \includegraphics[width=0.45\textwidth]{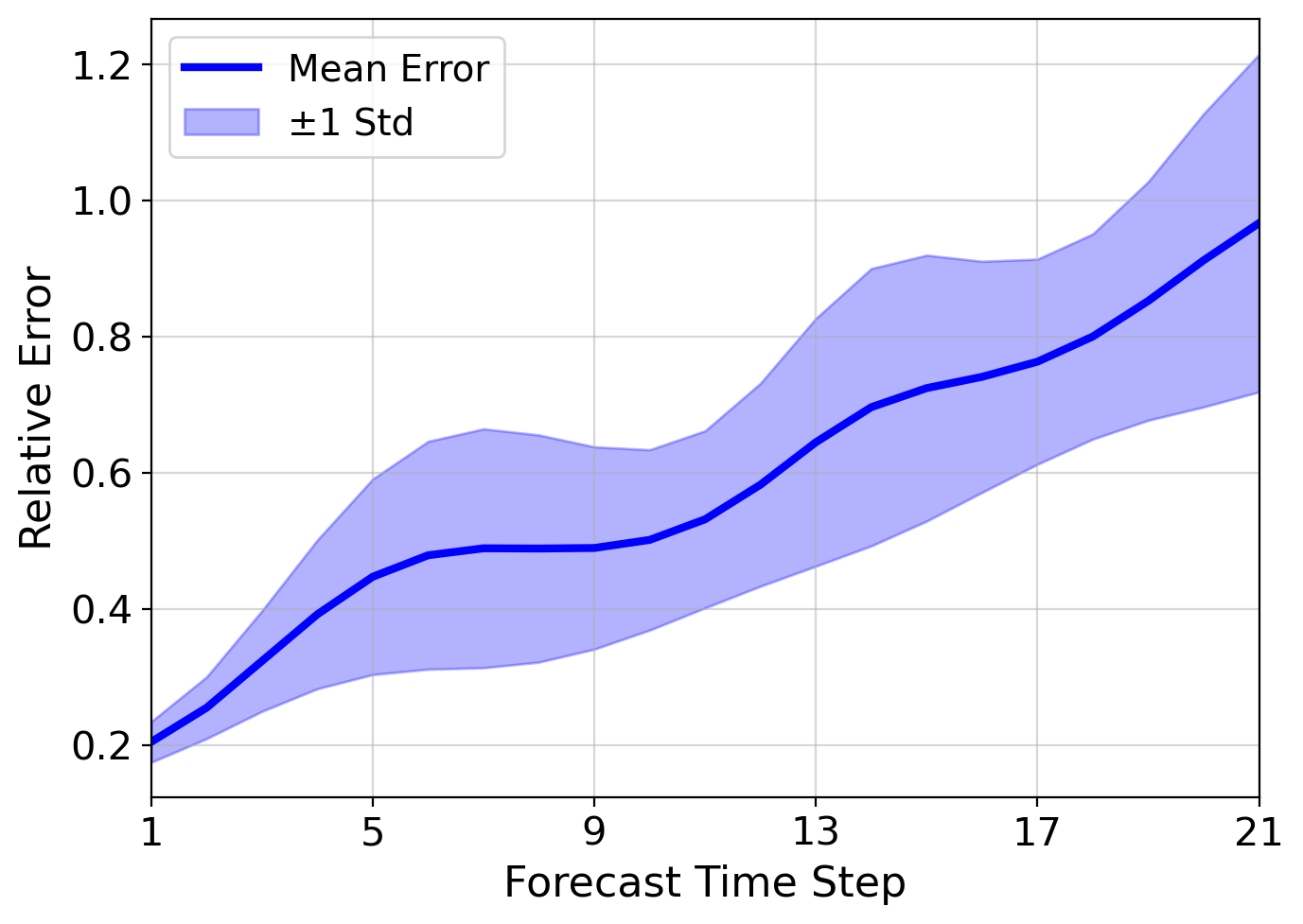}}
    \caption{Auto-regressive forecasting performance of STRIDE-FMMNN. The plots illustrate the temporal evolution of relative prediction error for the SWE (left) and Seismic (right) cases. The shaded regions denote the standard deviation across all test trajectories.}
    \label{fig:forecasting}
  \end{center}
  \vskip -0.2in
\end{figure}

The performance of the proposed framework also depends on sensor placement, especially when observations are extremely sparse. As shown in Fig.~\ref{fig:placement}, we trained STRIDE-FMMNN for the SWE case with 25 sensors arranged according to three different strategies. Here, QR refers to the SSPOR algorithm implemented in PySensors \cite{karnik_pysensors_2025}, which utilizes QR optimization by default. The three configurations yielded overall relative errors for $\eta$ of \textbf{15.75\%}, \textbf{16.28\%}, and \textbf{6.49\%}, respectively. These findings underscore the critical role of data-driven sensor placement in minimizing reconstruction error. Notably, QR placement achieves the lowest error by concentrating sensors in the lower left quarter, where the Gaussian bump is initialized, thereby capturing more information about the underlying dynamics.

\begin{figure}[htb]
  \begin{center}
    \centerline{\includegraphics[width=0.85\textwidth]{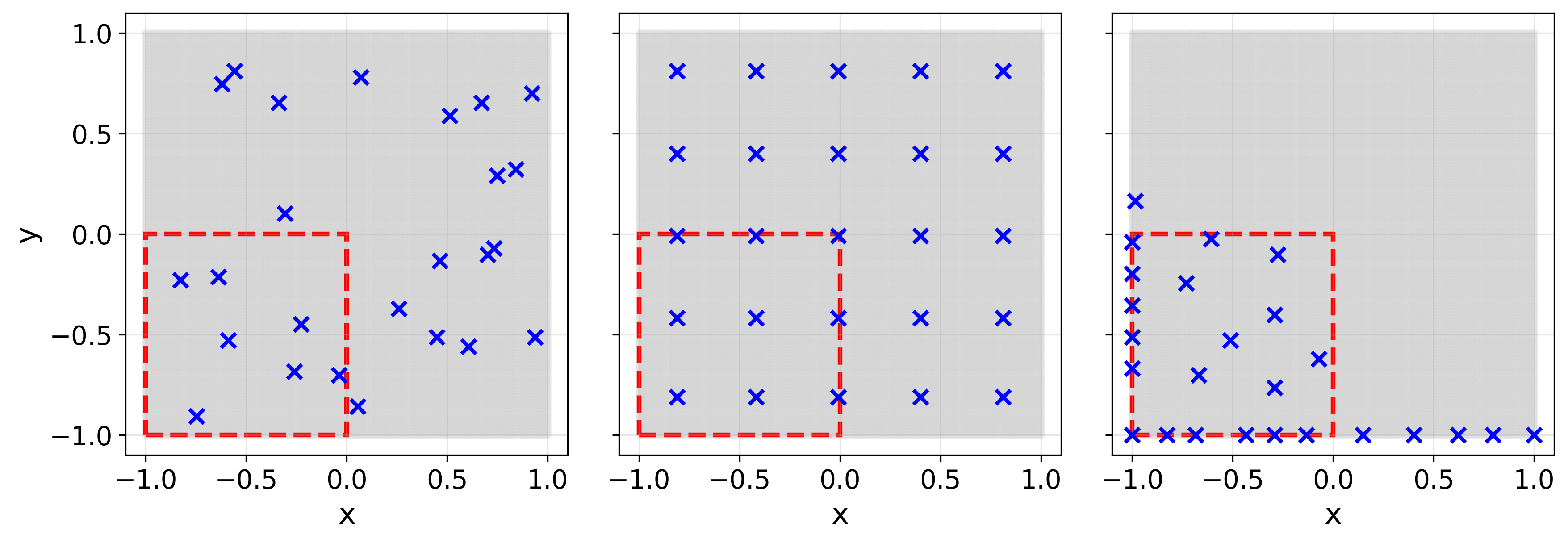}}
    \caption{Comparison of sparse sensor placement strategies in the SWE domain: random (left), uniform (middle), and QR (right). The relative prediction errors for $\eta$ with the three placement strategies are 15.75\%, 16.28\%, and 6.49\%, respectively.}
    \label{fig:placement}
  \end{center}
  \vskip -0.2in
\end{figure}

\subsection{Further Discussion}
STRIDE inherits the computational trade-offs of INR-based decoders. Training requires backpropagation through a comparatively deep coordinate-to-field network and repeated evaluation of the decoder at many query coordinates (even when we subsample spatial points during training). At inference time, reconstructing a full field requires querying the decoder at all spatial points, so the runtime scales (approximately) linearly with the output resolution; in contrast, SHRED and SHRED-ROM output the full field (or low-dimensional coefficients) with a single forward pass. Table~\ref{tab:time} shows that both STRIDE-FMMNN and STRIDE-SIREN incur substantially higher evaluation costs than the non-INR baselines, and they can also be slower to train depending on the spatial resolution and the decoder backbone. Training and evaluation are performed on a single NVIDIA L40S GPU.

\begin{table}[htb]
  \caption{Model comparison of training and evaluation time per dataset. STRIDE-FMMNN uses the FMMNN INR backbone; STRIDE-SIREN uses a SIREN backbone.}
  \label{tab:time}
  \begin{center}
    \begin{small}
      \begin{sc}
        \begin{tabular}{cc cccc} 
            \toprule
            Model & dataset $\rightarrow$ & \emph{KS} & \emph{FlowAO} & \emph{SWE (128$\times$128)} & \emph{Seismic} \\
            \midrule
            \multirow{2}{*}{SHRED} & $T_{train}$ (s) & 1178.20 & 3502.07 & 2969.87 & 14356.15 \\
                         & $T_{test}$ (s)       & 0.15 & 0.64 & 0.45 & 0.35 \\
            \midrule
            \multirow{2}{*}{SHRED-ROM} & $T_{train}$ (s) & 2642.42 & 1549.72 & 2454.69 & 1293.78 \\
                         & $T_{test}$ (s)        & 0.14 & 0.10 & 0.12 & 0.25 \\
            \midrule
            \multirow{2}{*}{\shortstack{STRIDE-\\SIREN}} & $T_{train}$ (s) & 4993.41 & 4335.55 & 5140.88 & 64340.34 \\
            	                         & $T_{test}$ (s)        & 3.30 & 16.06 & 14.19 & 6.90 \\
            \midrule
            \multirow{2}{*}{\shortstack{STRIDE-\\FMMNN}} & $T_{train}$ (s) & 2076.49 & 3895.35 & 7203.90 & 97803.72 \\
            	                         & $T_{test}$ (s)        & 3.35 & 13.93 & 17.82 & 14.57 \\
            \bottomrule
        \end{tabular}
      \end{sc}
    \end{small}
  \end{center}
\end{table}

These costs are the price of resolution-invariant reconstruction and improved expressivity. When super-resolution is not needed, the decoder can be evaluated on a coarser grid to trade accuracy for speed; more generally, future work could focus on accelerating INR inference (e.g., distillation to a lightweight grid decoder, hierarchical/coarse-to-fine querying, or caching on regular grids). Moreover, it remains to evaluate the framework on large-scale practical problems, particularly those characterized by higher-frequency components.

\section{Conclusion}
\label{sec:conc}
We introduced STRIDE, a two-stage framework for reconstructing spatiotemporal fields from sparse point-sensor measurements. STRIDE first maps a short observation window to a low-dimensional latent state using a temporal encoder, and then decodes the latent state with a coordinate-based implicit neural representation (INR) to produce continuous, resolution-invariant reconstructions. Across four benchmarks spanning chaotic dynamics and wave propagation, our default instantiation STRIDE-FMMNN achieves lower reconstruction errors than strong baselines and is more stable to train than the SIREN-backbone variant STRIDE-SIREN, while retaining the ability to super-resolve by querying the decoder at arbitrary spatial coordinates.

From a theoretical standpoint, we provided a justification for why a finite-dimensional latent state can suffice: under delay observability of point measurements on the relevant invariant set, the reconstruction operator factors through a finite-dimensional embedding, making STRIDE-type architectures natural approximators. A key limitation is the computational cost inherited from INR decoders, since reconstructing a full field requires many coordinate queries. Future work will focus on accelerating INR inference and scaling STRIDE to larger, higher-frequency problems and more challenging sensing regimes.

\section*{Acknowledgment:} The authors thank Prof. Nathan Kutz for the motivating discussion during his visit to Georgia Tech, and Sebastian Gutierrez Hernandez for his group presentation on the FMMNN architecture \cite{zhang_fourier_2025}. This research was supported by NSF grant \# 2325631.

\bibliographystyle{unsrt}  
\bibliography{references}  

\newpage
\appendix
\onecolumn

\section{Proof}

\subsection{Proof of Lemma~\ref{lem:stability-T}}
\label{sec:proof-lemma}
\begin{proof}
Let $\iota:X\hookrightarrow C^0(\Omega,\mathbb{R}^{d_x})$ denote the continuous embedding, and define $G_k:\mathcal{A}\to X$ by $G_k(\bsmu,\bsx):=F_\bsmu^k\bsx$. Then $T(\bsy)=\iota\!\left(G_k(\Psi(\bsy))\right)$.
Under Assumption~\ref{assump:delay-obs}, $\Psi:Y\to\mathcal{A}$ is Lipschitz and hence continuous. By assumption, $G_k$ is continuous on $\mathcal{A}$. Therefore, the composition $\bsy\mapsto \iota(G_k(\Psi(\bsy)))$ is continuous as a map from $Y$ into $C^0(\Omega,\mathbb{R}^{d_x})$, which proves the continuity of $T$.
For the stability bound, for any $\bsy,\bsy'\in Y$ we have
\[
\|T(\bsy)-T(\bsy')\|_{C^0}
\le C_{\mathrm{ev}}\|G_k(\Psi(\bsy))-G_k(\Psi(\bsy'))\|_X,
\]
where we used the embedding inequality. Using that $G_k$ is $L_{F^k}$-Lipschitz on $\mathcal{A}$ and $\Psi$ is $L_\Psi$-Lipschitz on $Y$,
\begin{align*}
    \|G_k(\Psi(\bsy))-G_k(\Psi(\bsy'))\|_X
    &\le L_{F^k}\|\Psi(\bsy)-\Psi(\bsy')\|_{\mathcal{P}\times X} \le L_{F^k}L_\Psi\|\bsy-\bsy'\|.
\end{align*}
Combining the two displays yields the claimed inequality.
\end{proof}

\subsection{Proof of Theorem~\ref{thm:operator-approx}}
\label{sec:proof-theorem}
\begin{proof}
Let $q=(k+1)p$ and recall that $Y=\Phi_k(\mathcal{A})\subset\mathbb{R}^{q}$ is compact. Define the end-of-window state map $U:Y\to X$ by
\[
U(\bsy):=G_k(\Psi(\bsy))\in X,\quad \bsy\in Y.
\]
Since $\Psi$ is continuous on $Y$ (Assumption~\ref{assump:delay-obs}) and $G_k$ is continuous on $\mathcal{A}$ by assumption, $U$ is continuous. Moreover, $U(Y)\subset \mathcal{A}_X$.

Because $\mathcal{A}_X$ is compact with finite box-counting dimension $d_X$, the Ma\~n\'e projection theorem implies that for any integer ${d_z}>2d_X$ there exists a bounded linear map $E:X\to\mathbb{R}^{d_z}$ that is injective on $\mathcal{A}_X$ and admits a (Hölder, hence continuous) inverse $E^{-1}:Z\to\mathcal{A}_X$ on $Z:=E(\mathcal{A}_X)\subset\mathbb{R}^{d_z}$ \cite{mane1981dimension}. Define the latent map $H:Y\to Z$ by $H(\bsy):=E(U(\bsy))$, which is continuous as a composition.
Next define the decoder target $D:\Omega\times Z\to\mathbb{R}^{d_x}$ by
\[
D(\bsxi,\bsz):=\bigl(E^{-1}(\bsz)\bigr)(\bsxi).
\]
This is well-defined because $E^{-1}(\bsz)\in\mathcal{A}_X\subset X$ and point evaluation is continuous on $X$. Since $E^{-1}$ is continuous on $Z$ and $(\bsxi,\bsx)\mapsto \bsx(\bsxi)$ is continuous on $\Omega\times X$, $D$ is continuous on the compact set $\Omega\times Z$. Finally, for any $\bsy\in Y$ and $\bsxi\in\Omega$,
\[
T(\bsy)(\bsxi)=U(\bsy)(\bsxi)=\bigl(E^{-1}(H(\bsy))\bigr)(\bsxi)=D\bigl(\bsxi,H(\bsy)\bigr),
\]
which proves the factorization.

For approximation, note that $H$ is continuous on compact $Y$ and $D$ is continuous on compact $\Omega\times Z$. Hence $D$ is uniformly continuous; let $\omega_D$ be a modulus of continuity in its second argument:
\begin{equation*}
    \omega_D(r):=\sup\bigl\{\|D(\bsxi,\bsz)-D(\bsxi,\bsz')\|:\, \bsxi\in\Omega,\ \bsz,\bsz'\in Z,\ \|\bsz-\bsz'\|\le r\bigr\},
\end{equation*}
so $\omega_D(r)\to 0$ as $r\to 0$. Fix $\varepsilon>0$ and choose $\delta>0$ such that $\omega_D(\delta)\le \varepsilon/2$.
By universal approximation on compact sets, there exist neural-network parametrizations $\mathcal{G}:Y\to\mathbb{R}^{d_z}$ and $\mathcal{F}:\Omega\times\mathbb{R}^{d_z}\to\mathbb{R}^{d_x}$ such that
\begin{equation*}
    \sup_{\bsy\in Y}\|H(\bsy)-\mathcal{G}(\bsy)\|\le \delta
    \quad\text{and }
    \sup_{\bsxi\in\Omega,\ \bsz\in Z}\|D(\bsxi,\bsz)-\mathcal{F}(\bsxi,\bsz)\|\le \varepsilon/2,
\end{equation*}
for example using recurrent/window encoders for $\mathcal{G}$ and feedforward/modulated networks for $\mathcal{F}$ \cite{funahashi1993approximation,hornik1989multilayer,leshno1993multilayer}.
Then, for any $\bsy\in Y$ and $\bsxi\in\Omega$, 
\begin{align*}
&\bigl\|T(\bsy)(\bsxi)-\mathcal{F}(\bsxi,\mathcal{G}(\bsy))\bigr\|
=\bigl\|D(\bsxi,H(\bsy))-\mathcal{F}(\bsxi,\mathcal{G}(\bsy))\bigr\|\\
&\le \|D(\bsxi,H(\bsy))-D(\bsxi,\mathcal{G}(\bsy))\|+\|D(\bsxi,\mathcal{G}(\bsy))-\mathcal{F}(\bsxi,\mathcal{G}(\bsy))\|\\
&\le \omega_D(\|H(\bsy)-\mathcal{G}(\bsy)\|)+\varepsilon/2
\le \omega_D(\delta)+\varepsilon/2\le \varepsilon,
\end{align*}
which concludes the proof.
\end{proof}

\begin{remark}[Approximate embedding and an error floor]\label{rem:approx-embedding}
Assumption~\ref{assump:delay-obs} is an idealization: with finite sampling, sensor noise, and limited sensor budgets, the delay map $\Phi_k$ may fail to be exactly injective on $\mathcal{A}$, or it may be injective but poorly conditioned.
Two standard relaxations make the stability picture explicit.
\emph{(i) Injective but not Lipschitz.} If $\Phi_k$ is injective on $\mathcal{A}$ but the inverse is only uniformly continuous, one may replace the Lipschitz bound in Assumption~\ref{assump:delay-obs} by a \emph{modulus of continuity} $\omega$ such that
\begin{equation}
    \|\Psi(\bsy)-\Psi(\bsy')\|_{\mathcal{P}\times X}\le \omega(\|\bsy-\bsy'\|),\quad \bsy,\bsy'\in Y.
\end{equation}
Then the stability estimate becomes
\begin{equation}
    \|T(\bsy)-T(\bsy')\|_{C^0}\le C_{\mathrm{ev}}L_{F^k}\,\omega(\|\bsy-\bsy'\|),
\end{equation}
so observation perturbations translate into reconstruction perturbations according to the (possibly sub-Lipschitz) conditioning encoded by $\omega$.
\emph{(ii) Non-injective (intrinsic ambiguity).} If $\Phi_k$ is not injective, a useful identifiability proxy is the ambiguity function
\begin{equation}
    \alpha(\varepsilon):=\sup\Bigl\{\|G_k(a)-G_k(a')\|_X:\ a,a'\in\mathcal{A},
    \ \|\Phi_k(a)-\Phi_k(a')\|\le \varepsilon\Bigr\},
\end{equation}
where $a=(\bsmu,\bsx)$ denotes the augmented state and $G_k(\bsmu,\bsx):=F_\bsmu^k\bsx$ is the end-of-window state.
When $\alpha(\varepsilon)>0$, two distinct augmented states (potentially with different $\bsmu$) can generate $\varepsilon$-close observation windows, so no method that uses only the window can guarantee reconstruction error below order $C_{\mathrm{ev}}\alpha(\varepsilon)$ (with $\varepsilon$ set by noise/discretization).
Both viewpoints motivate increasing the window length (larger $k$) and/or the number of sensors (larger $p$) until the empirical mapping becomes sufficiently well-conditioned, as explored in our ablation studies.
\end{remark}

\section{Training Details}
\label{hyper}
To determine the optimal hyperparameter choices for STRIDE-FMMNN in all the examples, we conduct the hyperparameter search using Bayesian optimization \cite{li_hyperband_2018}. The range of hyperparameters considered is listed in Table~\ref{tab:hyperrange}. The details of STRIDE training (including STRIDE-FMMNN and STRIDE-SIREN) and optimized hyperparameters are shown in Table~\ref{tab:detail}. SHRED and SHRED-ROM have the same temporal decoder as STRIDE, and have a shallow decoder with 2 hidden layers of 350 and 400 neurons, respectively, which is the same as the default setting in \cite{tomasetto_reduced_2025}. For all the trained models, we employ an early stopping mechanism: the training terminates once the validation loss does not decrease by more than 0.5\% in 10 epochs over an 80-epoch period.

\begin{table}[htb]
  \caption{Hyperparameter search range for STRIDE-FMMNN training}
  \label{tab:hyperrange}
  \begin{center}
    \begin{small}
      \begin{sc}
        \begin{tabular}{c llll} 
            \toprule
            Hyperparameter & \emph{KS} & \emph{FlowAO} & \emph{SWE (128$\times$128)} & \emph{Seismic} \\
            \midrule
            Number of latent states & 16 -- 128 & 16 -- 128 & 32 -- 256 & 32 -- 256 \\
            Decoder depth           & 2 -- 10 & 2 -- 10 & 3 -- 12 & 3 -- 12 \\
            Decoder rank            & 16 -- 128 & 16 -- 128 & 32 -- 256 & 32 -- 256 \\
            Decoder width           & 256 -- 2048 & 256 -- 2048 & 256 -- 2048 & 256 -- 2048 \\
            Fourier embedding dim   & 0 -- 50 & 0 -- 50 & 0 -- 50 & 0 -- 50 \\
            Learning rate           & 1e-3 -- 5e-3 & 1e-3 -- 5e-3 & 1e-3 -- 5e-3 & 1e-3 -- 4e-3 \\
            Batch size              & 128 -- 1024 & 16 -- 128 & 16 -- 128 & 64 -- 128 \\
            \bottomrule
        \end{tabular}
      \end{sc}
    \end{small}
  \end{center}
\end{table}

\begin{table}[htb]
  \caption{Training details for STRIDE}
  \label{tab:detail}
  \begin{center}
    \begin{small}
      \begin{sc}
        \begin{tabular}{c llll} 
            \toprule
            Hyperparameter & \emph{KS} & \emph{FlowAO} & \emph{SWE (128$\times$128)} & \emph{Seismic} \\
            \midrule
            Lag                     & 50 & 50 & 50 & 50 \\
            Number of sensors       & 2 & 3 & 100 & 100 \\
            Number of space points  & 100 & 5000 & 4096 & 4900 \\
            Number of latent states & 128 & 64 & 128 & 256 \\
            Number of hidden layers & 2 & 2 & 2 & 2 \\
            FMMNN decoder depth     & 3 & 6 & 8 & 4 \\
            FMMNN decoder rank      & 128 & 32 & 32 & 128 \\
            FMMNN decoder width     & 1024 & 256 & 256 & 1024 \\
            SIREN decoder depth     & 3 & 6 & 8 & 4 \\
            SIREN decoder width     & 384 & 96 & 96 & 384 \\
            Fourier embedding dim   & 0 & 5 & 20 & 5 \\
            FMMNN learning rate     & 3e-3 & 3e-3 & 2e-3 & 3e-3 \\
            SIREN learning rate     & 1e-4 & 1e-4 & 1e-4 & 1e-4 \\
            Scheduler gamma/patience & 0.4/10 & 0.4/10 & 0.4/10 & 0.4/10 \\
            Batch size              & 256 & 128 & 128 & 128 \\
            \bottomrule
        \end{tabular}
      \end{sc}
    \end{small}
  \end{center}
\end{table}

\newpage 
Tables~\ref{tab:auxMLP} and~\ref{tab:auxLSTM} specify the training details for the auxiliary models used in parameter estimation (see Fig.~\ref{fig:estimation}) and forecasting (see Fig.~\ref{fig:forecasting}).

\begin{table}[!htb]
  \caption{Training details for auxiliary MLP (parameter estimation)}
  \label{tab:auxMLP}
  \begin{center}
    \begin{small}
      \begin{sc}
        \begin{tabular}{c ll} 
            \toprule
            Hyperparameter & \emph{SWE (128$\times$128)} & \emph{FlowAO} \\
            \midrule
            Latent dimension        & 128 & 64 \\
            Hidden dimension        & 128 & 64 \\
            Number of hidden layers & 2 & 2 \\
            Parameter dimension     & 2 & 1 \\
            Activation function     & GELU & GELU \\
            Learning rate           & 2e-4 & 5e-4 \\
            Number of epochs        & 2000 & 2000 \\
            Batch size              & 1024 & 1024 \\
            \bottomrule
        \end{tabular}
      \end{sc}
    \end{small}
  \end{center}
\end{table}

\begin{table}[!htb]
  \caption{Training details for auxiliary LSTM (forecasting)}
  \label{tab:auxLSTM}
  \begin{center}
    \begin{small}
      \begin{sc}
        \begin{tabular}{c ll} 
            \toprule
            Hyperparameter & \emph{SWE (128$\times$128)} & \emph{Seismic} \\
            \midrule
            Input length            & 20 & 20 \\
            Output length           & 1 & 1 \\
            Latent dimension        & 128 & 256 \\
            Hidden dimension        & 256 & 256 \\
            Number of hidden layers & 2 & 2 \\
            Learning rate           & 1e-3 & 1e-3 \\
            Scheduler gamma/patience& 0.5/10 & 0.5/10 \\
            Number of epochs        & 400 & 400 \\
            Dropout                 & 0.1 & 0.1 \\
            Batch size              & 1024 & 1024 \\
            \bottomrule
        \end{tabular}
      \end{sc}
    \end{small}
  \end{center}
\end{table}

\end{document}